\newtheorem{definition}{Definition}
\newtheorem{proposition}{Proposition}
\newtheorem{proof}{Proof}
\newcommand{\SysName}{\texttt{SiRIIB}\xspace}
\definecolor{cvprblue}{rgb}{0.21,0.49,0.74}
\title{Singular Regularization with Information Bottleneck Improves Model's Adversarial Robustness}
\author{
Guanlin Li \\
Nanyang Technological University, S-Lab\\
{\tt\small guanlin001@e.ntu.edu.sg}
\and
Naishan Zheng\\
Nanyang Technological University, S-Lab
\and
Man Zhou\\
Nanyang Technological University, S-Lab
\and
Jie Zhang\\
Nanyang Technological University
\and
Tianwei Zhang\\
Nanyang Technological University
}
\begin{document}
\maketitle

\begin{abstract}
Adversarial examples are one of the most severe threats to deep learning models. Numerous works have been proposed to study and defend adversarial examples. However, these works lack analysis of adversarial information or perturbation, which cannot reveal the mystery of adversarial examples and lose proper interpretation. In this paper, we aim to fill this gap by studying adversarial information as unstructured noise, which does not have a clear pattern. Specifically, we provide some empirical studies with singular value decomposition, by decomposing images into several matrices, to analyze adversarial information for different attacks. Based on the analysis, we propose a new module to regularize adversarial information and combine information bottleneck theory, which is proposed to theoretically restrict intermediate representations. Therefore, our method is interpretable. Moreover, the fashion of our design is a novel principle that is general and unified. Equipped with our new module, we evaluate two popular model structures on two mainstream datasets with various adversarial attacks. The results indicate that the improvement in robust accuracy is significant. On the other hand, we prove that our method is efficient with only a few additional parameters and able to be explained under regional faithfulness analysis.
\end{abstract}

\section{Introduction}

Deep learning models, especially Convolutional Neural Networks (CNNs) in computer vision, suffer from adversarial examples~\cite{goodfellow_explaining_2015}, which can cause the models to give incorrect responses. With deep learning models becoming an important part of various services, it is vital to study and understand adversarial examples. To the best of our knowledge, there are three main perspectives for studying and understanding adversarial examples. The first is to treat adversarial examples as some specific features~\cite{ilyas_adversarial_2019}. These features help the model to learn a robust representation to defend against adversarial attacks. Other data without such features will only make the model obtain clean accuracy but fail under attacks. The second perspective is to study adversarial properties over the whole training set~\cite{wang_improving_2020,ge_advancing_2023}. They find that different training data have unique adversarial properties. Some data are mainly contributing to the clean accuracy. Some contribute primarily to adversarial robustness. The last perspective is to study how the model learns adversarial examples during the training process~\cite{rice_overfitting_2020,dong_exploring_2022}. Because models are much easier to overfit adversarial examples, studying and understanding such a phenomenon will help us design better training methods. Although previous works have studied adversarial examples from several perspectives, they mainly focus on the training process and study one specific attack, i.e., PGD attack~\cite{madry_towards_2018}. That will lose some guarantees of consistency between training and testing and among different attacks. Different from previous works, we take one step forward to study the straightforward properties of adversarial examples under different attacks and training strategies to make our observations consistent and general. 

\begin{figure}[t]
\centering
\includegraphics[width=0.9\linewidth]{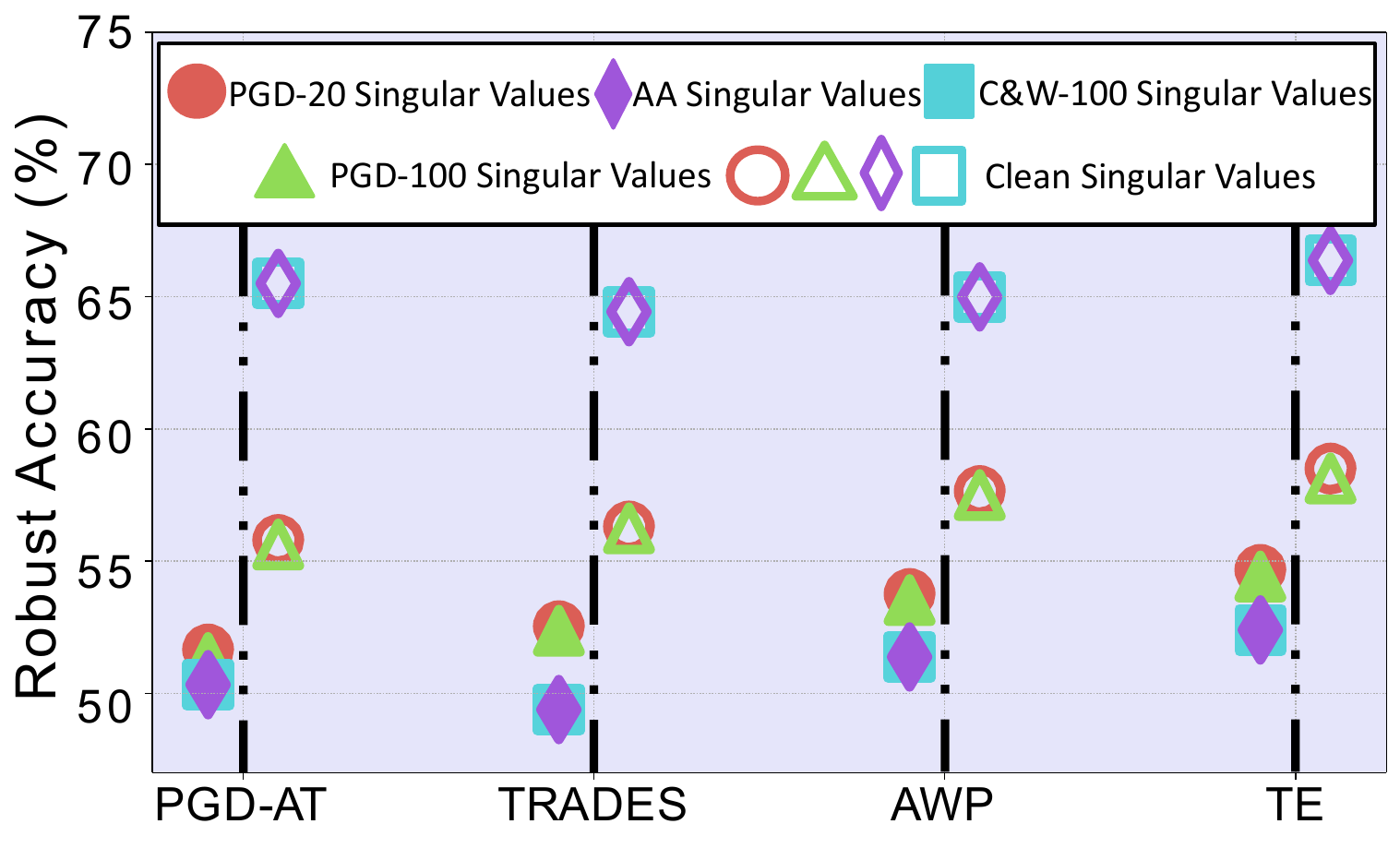}
\caption{Combination between singular vectors of adversarial examples with singular values from clean images and adversarial examples under different attacks for different training strategies (PGD-AT~\cite{madry_towards_2018}, TRADES~\cite{zhang_theoretically_2019}, AWP~\cite{wu_adversarial_2020}, and TE~\cite{dong_exploring_2022}). Solid markers (left) mean we do not replace the singular values with clean ones. Hollow markers (right) mean we replace the singular values with clean ones for specific attacks.}
\label{fig:intro} 
\vspace{-13pt}
\end{figure}

In novelty, we analyze adversarial examples from the matrix decomposition aspect and find that different adversarial examples from different sources have similar properties. Considering that the given image is made up of matrices stacked along the color channel, we can decompose each matrix for every color channel to obtain the singular values and corresponding singular vectors, dubbed singular value decomposition (SVD)~\cite{wall_singular_2003}. Generally speaking, the singular values and vectors store all the information of the given image. In the image restoration perspective, different types of corruption can be reflected in singular values or vectors~\cite{zhang_decomposition_2023}. Similarly, added perturbation in the adversarial examples can be seen as unstructured noise, i.e., one specific type of corruption, which is related to the model's parameters and the clean images and does not have a specific pattern. Naturally, we can do SVD on adversarial examples and obtain their singular values and vectors to study the properties of adversarial examples with these important elements. Specifically, we study singular values and vectors of adversarial examples and their corresponding clean data by performing SVD on them, respectively. Then, we combine the singular vectors of adversarial examples and singular values of the corresponding clean data. In this way, we can study and compare adversarial information in singular values and vectors. In Figure~\ref{fig:intro}, we observe that after using clean singular values, the robust accuracy will increase under different attacks. Our results indicate that, for different training strategies and adversarial attacks, singular values consistently contain adversarial information. Therefore, replacing them with clean singular values will increase the robust accuracy. The detailed results and analysis can be found in Section~\ref{sec:pre}.

This consistency among training methods and various attacks inspires us to perform calibration on adversarial information in the singular values and vectors to defend against adversarial examples. However, it is not trivial to directly modulate the singular values and vectors for given images, because they are hidden behind the pixel data. Therefore, we introduce and develop custom modules from the previous work~\cite{zhang_decomposition_2023} to separate singular values and vectors from the images and perform calibration on them, respectively. To make our method more explainable, we integrate the singular regularization operation in a new plug-in module based on information bottleneck theory~\cite{alemi_deep_2017}, dubbed Singular Regularization with Implicit Information Bottleneck (\SysName). The information bottleneck theory can be adopted to explain intermediate features of the models and give a guarantee for robustness. Specifically, \SysName can calibrate adversarial examples from the perspective of SVD and transfer extracted features to the main classification model following the information bottleneck theory, i.e., to compress the modulated information that is extracted from inputs, and to contain sufficient information to guide the following prediction task. On the other hand, \SysName is a general module that can cooperate with residual-based CNNs, such as ResNet~\cite{he_deep_2016} and WideResNet~\cite{zagoruyko_wide_2016}, and different training strategies, such as PGD-AT~\cite{madry_towards_2018} and TRADES~\cite{zhang_theoretically_2019}. With our comprehensive evaluation, we prove that \SysName with few parameters will improve robust accuracy under various attacks. Overall, our contribution can be summarized as follows:
\begin{itemize}
    \item We study the properties of adversarial examples with SVD and find different attacks that will always influence the singular values and vectors but to different degrees.
    \item We design a new plug-in module \SysName based on information bottleneck theory with singular regularization, which is proven to be effective in defending against various attacks. Our \SysName is universal and can cooperate with different models. 
    \item We show that \SysName can calibrate adversarial information in adversarial examples and induce the model to learn a more compressed representation. Furthermore, it can reduce the sensitivity of local perturbation during the attack.
\end{itemize}

\section{Related Works}

To improve the adversarial robustness of models, there are several methods from different perspectives. In summary, there are two main traces, i.e., studying training methods~\cite{madry_towards_2018} and studying models~\cite{xie_feature_2019}.

\textbf{Training Methods}. When training the classification models, we can generate adversarial examples for each batch of clean data and train the models with these adversarial examples~\cite{madry_towards_2018}, which is called adversarial training. It can be formulated as the following min-max problem:
\begin{align*}
    \min_\theta \max_{x_\mathrm{adv}}L(x_\mathrm{adv}, y;\theta)
\end{align*}
where $x_\mathrm{adv}$ is the training sample generated from a clean one $x$ to maximize the loss function $L(\cdot)$, $y$ is the ground-truth label, $\theta$ is the model parameters. It is to say, we find $x_\mathrm{adv}$ based on $x$ to maximize the value $L(x_\mathrm{adv}, y;\theta)$, and then we optimize the model to minimize the loss $L(x_\mathrm{adv}, y;\theta)$. Zhang et al.~\cite{zhang_theoretically_2019} explore the trade-off between clean accuracy and robust accuracy during adversarial training and propose a new training strategy to better balance them. 


\textbf{Model Perspective}. There are some works aiming to improve the existing model's robustness by replacing modules with newly designed ones. For example, Huang et al.~\cite{huang} propose a new residual block focusing on the depth and width to improve the residual model's robustness. On the other hand, some works~\cite{bai} focus on comparing the robustness of traditional convolutional models and visual transformers~\cite{vit}.


Different from previous works, we aim to introduce a new plug-in module, whose design is inspired by the singular value decomposition~\cite{wall_singular_2003} and information bottleneck theory~\cite{alemi_deep_2017}. \textbf{Our module is deterministic, differentiable, explainable, and efficient, with only a few parameters.}

\begin{figure}[t]
\centering
\includegraphics[width=0.95\linewidth]{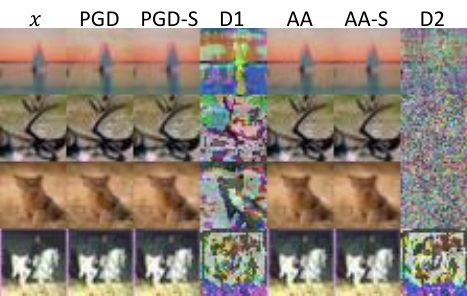}
\caption{Visualization results of singular value combinations. $x$ is clean images. PGD and AA mean adversarial examples generated by PGD-20 and AA. PGD-S and AA-S mean we replace singular values of adversarial examples with clean singular values. D1 and D2 are differences between PGD-S and PGD and between AA-S and AA, respectively.}
\label{fig:intro_svd} 
\vspace{-15pt}
\end{figure}

\section{Preliminaries}
\label{sec:pre}

In this section, we first provide some empirical studies to analyze adversarial examples under SVD. Then, we introduce a general method to modulate the singular values and vectors for a given image without explicitly performing SVD on it.

\subsection{Empirical Study under SVD}

In our empirical studies, to understand singular values and vectors of adversarial examples and their corresponding clean data, we first perform SVD on them, respectively. Then, we combine the singular vectors of adversarial examples and singular values of corresponding clean data. In Figure~\ref{fig:intro}, we compare the robust accuracy of ResNet-18~\cite{he_deep_2016} trained with different adversarial training methods on CIFAR-10 from the aspects of singular values. The results indicate a strong consistency, which is not related to the training method but related to the attacks. First, for all attacks, after using clean singular values, robust accuracy will increase. On the other hand, different attacks have slight differences under the view of adversarial information. Specifically, based on our observations, some adversarial attacks, such as PGD~\cite{madry_towards_2018}, hide the adversarial information in the singular values and vectors equally. Because, if we replace the singular values of adversarial examples with the singular values of their corresponding clean images, the robust accuracy will slightly increase. However, some adversarial attacks, like C\&W~\cite{carlini_towards_2017} and Autoattack (AA)~\cite{croce_reliable_2020}, mainly hide adversarial information in singular values. Simply replacing the singular values of adversarial examples with the singular values of their corresponding clean images will significantly increase robust accuracy. In Figure~\ref{fig:intro_svd}, we further visualize the differences before and after replacing with clean singular values for adversarial examples generated by PGD-20 and AA. It is clear that clean singular values will cause a specific noise pattern for PGD-20 examples and part of AA examples. But for some AA examples, replacing with clean singular values will cause a random noise pattern, which is closer to the adversarial perturbation. This explains why introducing clean singular values in adversarial examples generated by AA leads to higher accuracy improvements. These observations inspire us to calibrate singular values and vectors with clean images to defend against adversarial attacks. However, in practice, we cannot manually or randomly modify the SVD results of the given images and then reconstruct the images based on the modified SVD results, because such operations will be non-differentiable and random, which will cause gradient obfuscation, which is proven to give a false sense of security~\cite{athalye_obfuscated_2018}. In the following, we will introduce a deterministic, differentiable, and learnable method to modulate singular values and vectors.

\subsection{Singular Regularization}

To design a deterministic, differentiable, and learnable method to modulate singular values and vectors, we follow the previous method~\cite{zhang_decomposition_2023} to implicitly modify the singular values and vectors from the images, respectively. It is because singular values and vectors correspond to different information, i.e., singular vectors represent the directions in the image space along which the most variance occurs, and singular values represent the importance of each corresponding singular vector. Therefore, we calibrate them separately, i.e., singular value modification and singular vector modification, and we fuse the modified results to obtain modulated features. 

\textbf{Singular Vector Modification.} During this operation, the most important thing is to keep the singular values unchanged. Intuitively, we can find the fact that multiplying an arbitrary matrix by orthogonal matrices will not change the singular values of the arbitrary matrix. For example, given an arbitrary real matrix $X\in \mathbb{R}^{n\times m}$ and an orthogonal real matrix $P\in \mathbb{R}^{n\times n}$, the singular values of $PX$ are the same as those of $X$. To prove it, we can first write $X$ into SVD format, $X=U\Sigma V'$, where $U$ and $V'$ are two orthogonal matrices. Then, we have $PX=PU\Sigma V'$, in which we only need to prove $(PU)(PU)'=I$, where $I$ is the identity matrix. Clearly, we have
\begin{align*}
  (PU)(PU)' = PUU'P' = PIP' = PP' = I.  
\end{align*} 
Therefore, this operation will only affect the singular vectors. Based on the above information, we can restrict the learnable weight matrix to be orthogonal and use it to modify the singular vector for given images.

\textbf{Singular Value Modification.} On the other hand, modifying singular values is non-trivial, due to the inherent inaccessibility of the singular values. However, the previous work~\cite{zhang_decomposition_2023} finds that the singular values can be modified alone in the Fourier domain. Specifically, we can present $X\in \mathbb{R}^{n\times m}$ in two formations, i.e., 
\begin{align*}
    &X=U\Sigma V',\\
    X=\frac{1}{nm}\sum_{u=0}^{n-1}&\sum_{v=0}^{m-1}G(u,v)e^{2\pi i(\frac{ua}{n}+\frac{vb}{m})}, \\
    a\in &\mathbb{R}^{n-1}, b\in \mathbb{R}^{m-1}
\end{align*}
where $G(u,v)$ represents the coefficients of the Fourier transform of $X$. If we use $u_i$ and $v_i$ to present the columns of $U$ and $V$, and use $\sigma_i$ to represent singular values in $\Sigma$, we can rewrite $X$ in a way of coefficients multiplying a group of basis, i.e., 
\begin{align*}
  X=\sum_{i=1}^{min(n,m)}\sigma_i u_i v_i' = \frac{1}{nm}\sum_{u=0}^{n-1}\sum_{v=0}^{m-1}G(u,v)\phi(u,v),  
\end{align*} 
where $\phi(u,v)=e^{2\pi i(\frac{ua}{n}+\frac{vb}{m})}$. In the Fourier domain, we can modify the coefficients $G(u,v)$, and the previous work~\cite{zhang_decomposition_2023} proves that $\sigma_i$ and $G(u,v)$ can be approximated to each other. Therefore, we transfer $X$ into the Fourier domain and modify $G(u,v)$ as an equivalent operation of modifying singular values.

\section{Singular Regularization with Implicit Information Bottleneck}

In this section, we first introduce the information bottleneck theory to the module design and prove that information compression and regularization can be implicitly achieved with carefully designed additional connections. Then, we propose \SysName, which contains the singular regularization function and follows the information bottleneck theory, including how it cooperates with residual-based models and how it can be trained during the adversarial training process.

\begin{figure}[t]
\centering
\includegraphics[width=1.0\linewidth]{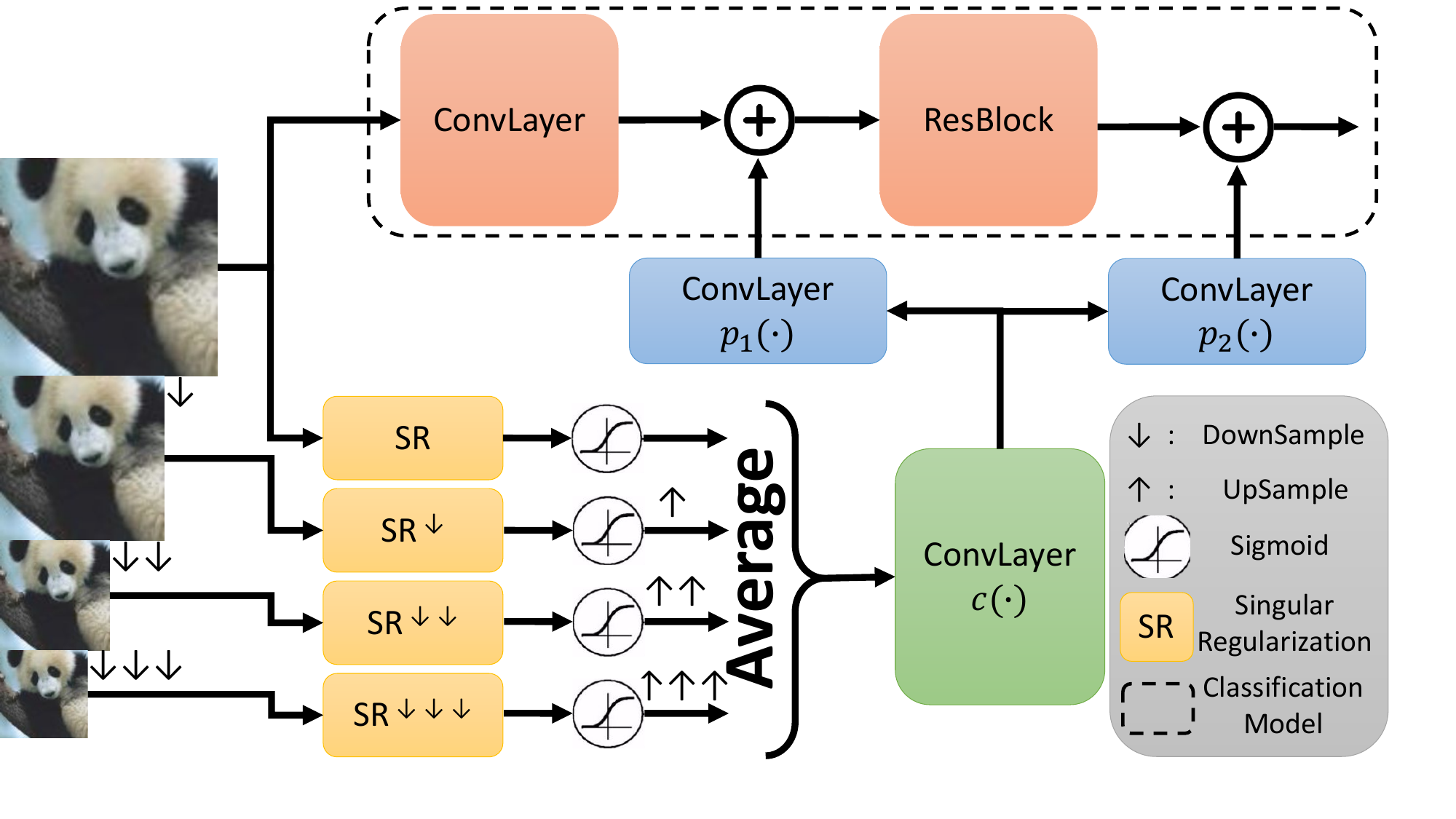}
\caption{Overview of \SysName. We adopt multiple scale inputs for the singular regularization modules. Then, after the Sigmoid function, we average the outputs to obtain smoothed results. Finally, we adopt several linear transformations to modify the feature dimensions and add the outputs to the original classification model.}
\label{fig:overview} 
\vspace{-10pt}
\end{figure}

\subsection{Information Bottleneck in Module Design}

The information bottleneck (IB) theory~\cite{alemi_deep_2017} can be adopted to explain the model's adversarial robustness. In previous work, it usually needs to be explicitly optimized with Kullback–Leibler divergence. However, we find that it can be implicitly executed with carefully designed skip connections. First, we recall the details of the information bottleneck theory. 

For a deep learning model, the mutual information between the intermediate representation and the target label is defined as follows.
\begin{definition}\label{def:mi}
Given a deep learning model $f(\cdot; \Theta)$, parameterized by $\Theta$, we regard some intermediate layers that project the input $x$ to an encoding $z$, defined as $p(z|x;\theta), \theta\in\Theta$. Suppose that the target label for $x$ is $y$, and the mutual information between $z$ and $y$ is defined as $I(z,y;\theta)$.
\end{definition}
Similarly, based on Definition~\ref{def:mi}, we have mutual information between two random variables $Z$ and $Y$, i.e., 
\begin{align*}
    I(Z, Y;\theta)=\int dx dy 
 p(z,y|\theta)\log\frac{p(z,y|\theta)}{p(z|\theta)p(y|\theta)},
\end{align*}
where $z$ and $y$ are observations of variables $Z$ and $Y$. To find the best representation $Z$ for the input $X$ under the information complexity constraint, $I(X,Z)\le I_c$, we obtain the following optimization goal:
\begin{align*}
     \max_\theta I(Z,Y;\theta) \quad s.t. \quad I(X,Z;\theta)\le I_c,
\end{align*}
which can be further converted to 
\begin{align*}
   \max_\theta R(\theta) = I(Z,Y;\theta) - \lambda I(Z,X;\theta) 
\end{align*}
by introducing the Lagrange multiplier $\lambda$. By maximizing $R(\theta)$, we aim to obtain $Z$, which meets the conditions where we can predict $Y$ from $Z$ with the highest probability, and $Z$ compresses $X$ as much as possible. Specifically, a large $\lambda$ will increase the compression ratio. Clearly, maximizing $R(\theta)$ during the model training process is to explicitly optimize the latent representation of $f(\cdot;\Theta)$. 

However, without directly optimizing the $R(\theta)$, we find that it is possible to implicitly optimize the same objective with correctly designed skip connections. In the following, we give a formal definition for the IB skip connection used in our paper.

\begin{definition}\label{def:ibsc}
Given an intermediate representation $z$ of the input $x$, we say that there exists an IB skip connection between the input $x$ and the representation $z = f(x;\theta) + g(x;\tau)$, if $I(z, x; \theta, \tau) \le I(f(x;\theta), x; \theta)$.
\end{definition}

The IB skip connection $g(x;\tau)$ in Definition~\ref{def:ibsc} means it can reduce mutual information between the input $x$ and the intermediate representation $z$ by adding restrained information to the original $f(x;\theta)$. In the following, we will give an example to build such a connection.

\noindent\textbf{\textit{Intuition 1}} Double-sided saturating activation functions, such as Sigmoid or Softmax, will compress the information and explicitly achieve the function of information bottleneck.

In fact, this intuition has been proved in the previous work~\cite{lee_reducing_2021}. Furthermore, this encoding compression is not related to loss functions. Therefore, based on the intuition, we give a proposition to show that it is possible to design a skip connection, which can reduce the mutual information between intermediate features and the inputs.

\begin{proposition}\label{pro:sc}
If functional $\mathcal{G}(\cdot; \tau)$ is equipped with a double-sided saturating activation function, e.g., Sigmoid, there exists $g(\cdot;\tau)\in \mathcal{G}(\cdot; \tau)$, which can be used as an IB skip connection.
\end{proposition}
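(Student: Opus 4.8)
The plan is to prove Proposition~\ref{pro:sc} by \emph{exhibiting} one admissible member of the family $\mathcal{G}(\cdot;\tau)$, since the claim is existential and a single good construction suffices. The member I would pick is the one whose saturating units are driven deep into their flat regime, so that the branch output $g(\cdot;\tau)$ degenerates to a linear image of a constant vector. This is legitimate precisely because the defining feature of $\mathcal{G}(\cdot;\tau)$ is that it is ``equipped with a double-sided saturating activation function'': such an activation takes a constant value in the limit of large-magnitude pre-activations, and the averaging step and linear transformations of Figure~\ref{fig:overview} carry that constant through to a constant $c$.

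First I would recall exactly what Intuition~1, i.e.\ the result of~\cite{lee_reducing_2021}, supplies: the output $s(x;\tau)=\sigma(\psi(x;\tau))$ of a double-sided saturating unit has range contained in a bounded set, hence bounded entropy, and its mutual information with the input is controlled by that bound and collapses as the pre-activation is scaled up. I would then define $g(x;\tau)$ as the composition of this saturating block with the averaging and the linear maps, and fix $\tau$ so that $\sigma(\psi(\cdot;\tau))$ is constant, whence $g(\cdot;\tau)\equiv c$. Next I invoke invariance of mutual information under a bijection applied to one argument: with this choice, $z=f(x;\theta)+g(x;\tau)=f(x;\theta)+c$, and $t\mapsto t+c$ is a bijection with measurable inverse, so $I(z,x;\theta,\tau)=I(f(x;\theta),x;\theta)$. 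In particular the inequality $I(z,x;\theta,\tau)\le I(f(x;\theta),x;\theta)$ of Definition~\ref{def:ibsc} holds with equality, so $g(\cdot;\tau)$ is an IB skip connection and the proposition follows.

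To make the statement quantitatively meaningful rather than merely true, I would also carry out the non-degenerate version: keep the pre-activations large but finite, use the data-processing inequality $I(g(x;\tau),x)\le I(s(x;\tau),x)\le\beta(\tau)$ with $\beta(\tau)\to 0$ in the saturation limit, and then bound $I(f+g,x)$ by $I(f,x)$ plus a term vanishing with $\beta(\tau)$ via a continuity/perturbation estimate for mutual information. This shows the skip connection genuinely compresses (strict inequality) for an open set of parameters, not only at the degenerate point.

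The main obstacle is exactly this last step: mutual information is \emph{not} monotone under adding a signal to $f$ — if $f$ were information-free, a maliciously chosen $g$ could inject all of $H(x)$ into $z$ — so the argument cannot treat $g$ as an arbitrary bounded perturbation and must exploit that saturation forces the chosen $g$ to be (nearly) constant, hence (nearly) information-free. The technical care needed there, namely a stable continuity bound for $I(f+g,x)$ in an appropriate topology together with the usual differential-versus-discrete-entropy bookkeeping for continuous features, is where the real effort lies; the degenerate construction already secures the existence statement that Proposition~\ref{pro:sc} asserts.
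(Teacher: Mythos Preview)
Your proof is correct and takes a genuinely different route from the paper's. You exploit the purely existential form of the claim by exhibiting the degenerate member of $\mathcal{G}$ whose output is a constant $c$ (attainable, e.g., by taking the pre-activation linear map to be zero so that the sigmoid returns the constant $1/2$, then pushing through the remaining linear layer), and you close with the standard fact that mutual information is invariant under the bijection $t\mapsto t+c$, giving $I(z,x)=I(f(x;\theta),x)$ with equality. That cleanly secures Definition~\ref{def:ibsc}.

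The paper argues for the same architectural template $g=g_2\circ\varsigma\circ g_1$ but aims at a \emph{non-degenerate} instance. It decomposes both mutual informations as $H(\cdot)-H(\cdot\mid x)$, notes that $H(z_1+z_2\mid x)\ge H(z_1\mid x)$ (in fact equality here, since both branches are deterministic in $x$), and then argues heuristically that $H(z_1+z_2)\lesssim H(z_1)$ because the sigmoid compresses the range and the two branches share redundant information about $x$; it finally appeals to ``a proper supervision signal'' during training to realise the inequality. Your construction buys rigour and brevity for the bare existence statement and sidesteps exactly the non-monotonicity pitfall you correctly flag; the paper's argument, while looser, is pitched at the non-trivial branch that is actually deployed in \SysName and is meant to explain \emph{why} training should drive that branch toward compression, which your degenerate witness does not address.
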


The proof of Proposition~\ref{pro:sc} is provided in the supplementary materials. In summary, we build $g(x;\tau)$ with linear transformations and Sigmoid and show that it can reduce the mutual information after adding it to the original model.

\begin{table}[ht]
\centering
\begin{adjustbox}{max width=1.0\linewidth}
\begin{tabular}{c|c|c}
 \Xhline{1.5pt}
\textbf{\# of $p_i(\cdot)$} & \textbf{Clean Accuracy} & \textbf{AA} \\ \hline
1 & 82.53 & 48.03 \\ \hline
2 & 83.31 & 47.80 \\ \hline
3 & 83.07 & \textbf{48.05} \\
 \Xhline{1.5pt}
\end{tabular}
\end{adjustbox}
\caption{Ablation study on the number of $p_i(\cdot)$. We compare models equipped with different numbers of IB skip connections.}
\vspace{-5pt}
\label{tab:layer}
\end{table}

\begin{table}[ht]
\centering
\begin{adjustbox}{max width=1.0\linewidth}
\begin{tabular}{c|c|c}
 \Xhline{1.5pt}
\textbf{$\lambda_1$} & \textbf{Clean Accuracy} & \textbf{AA} \\ \hline
1.0 & 83.07 & 48.05 \\ \hline
1.5 & 82.84 & 48.16 \\ \hline
3.0 & 82.84 & 48.21 \\ \hline
5.0 & 83.00 & 48.36 \\ \hline
20.0 & \textbf{83.27} & \textbf{48.41} \\ \hline
30.0 & 82.99 & 48.35 \\
 \Xhline{1.5pt}
\end{tabular}
\end{adjustbox}
\caption{Ablation study on $\lambda_1$.}
\label{tab:lambda}
\vspace{-10pt}
\end{table}

\begin{figure}[t]
\centering
\includegraphics[width=1.0\linewidth]{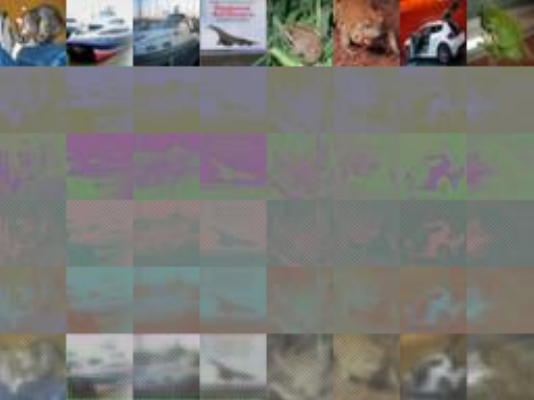}
\vspace{-15pt}
\caption{Visualization of $x_\mathrm{avg}$ under different $\lambda_1$. The first row is clean images. From the second row to the last row, $\lambda_1$ is 1.0, 1.5, 3.0, 5.0, and 20.0.}
\label{fig:lambda} 
\vspace{-15pt}
\end{figure}

\begin{figure*}[ht]
\centering
\begin{subfigure}[b]{0.23\linewidth}
\centering
\includegraphics[width=\linewidth]{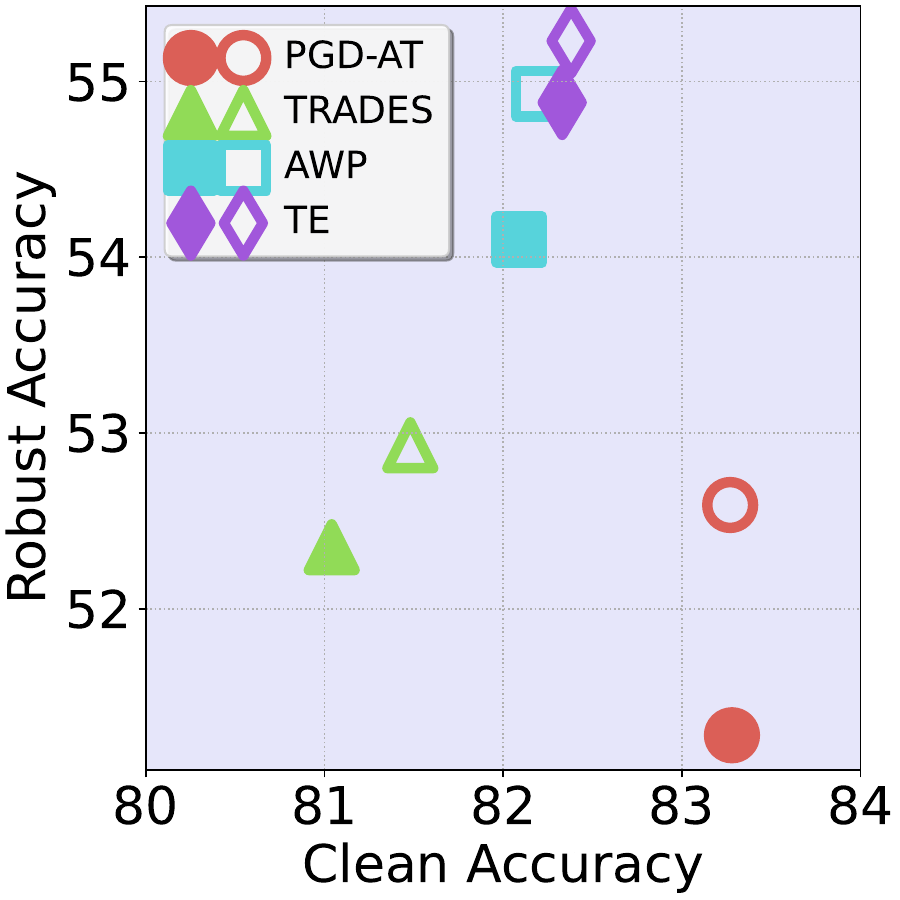}
\caption{Under PGD-20.}
\label{fig:cifar10-resnet-pgd20} 
\end{subfigure}
\begin{subfigure}[b]{0.23\linewidth}
\centering
\includegraphics[width=\linewidth]{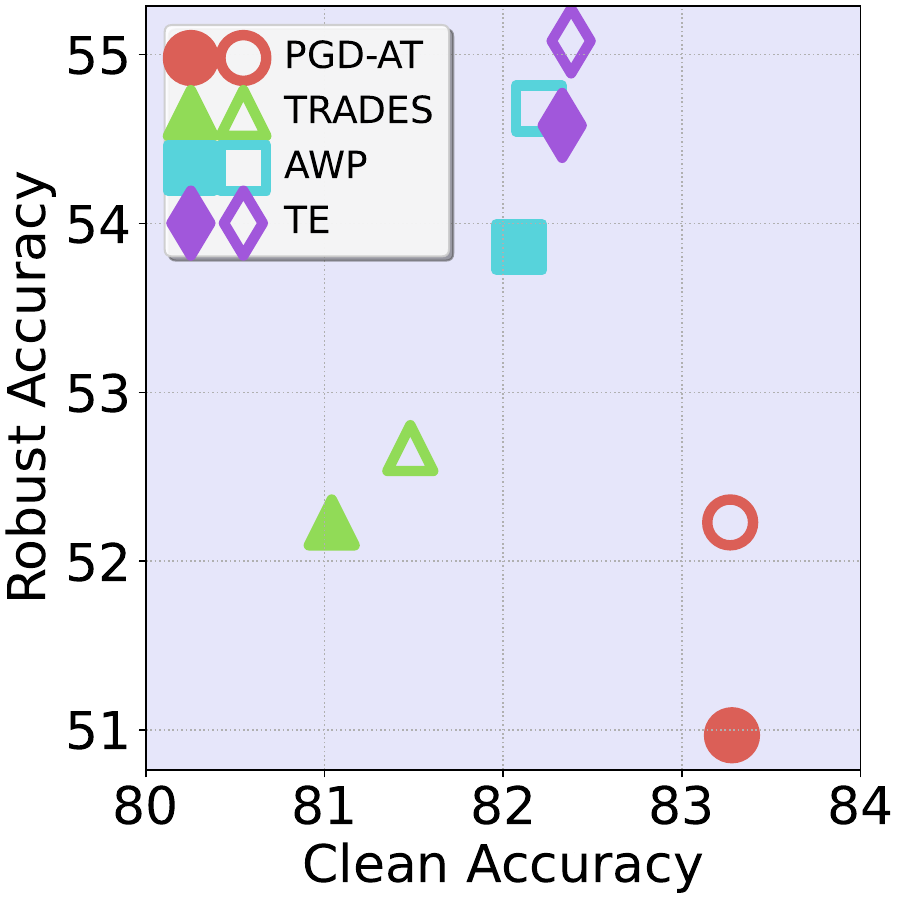}
\caption{Under PGD-100.}
\label{fig:cifar10-resnet-pgd100} 
\end{subfigure}
\begin{subfigure}[b]{0.23\linewidth}
\centering
\includegraphics[width=\linewidth]{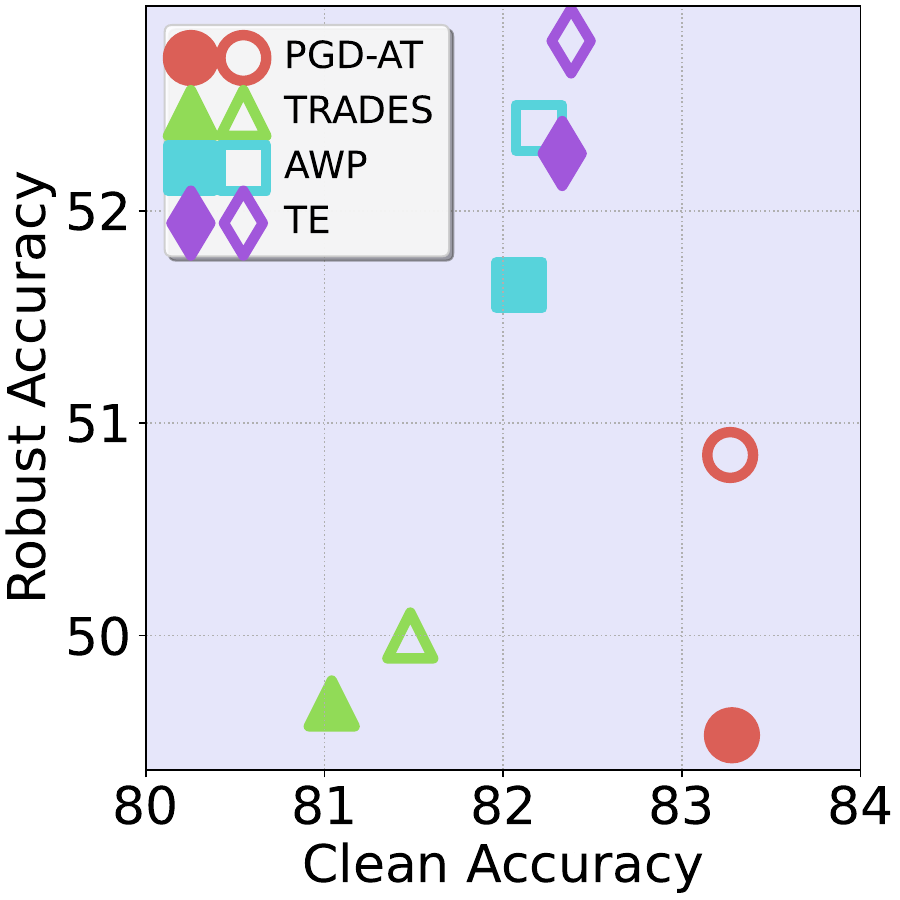}
\caption{Under C\&W-100.}
\label{fig:cifar10-resnet-cw} 
\end{subfigure}
\begin{subfigure}[b]{0.23\linewidth}
\centering
\includegraphics[width=\linewidth]{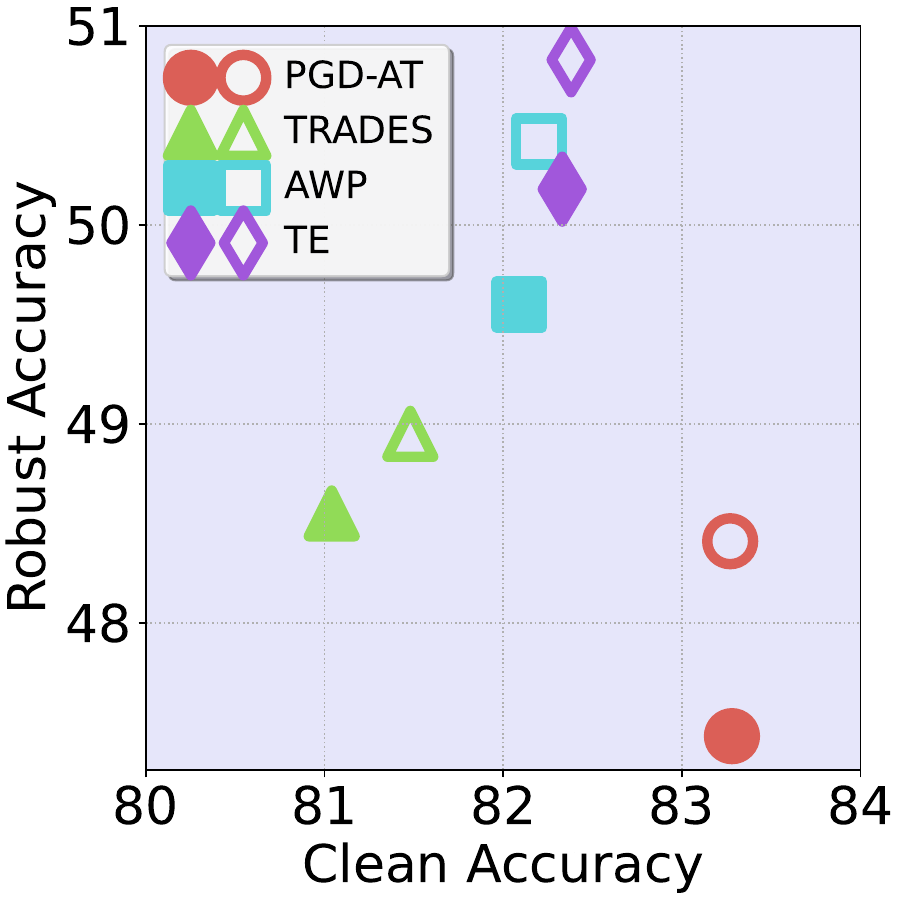}
\caption{Under AA.}
\label{fig:cifar10-resnet-aa} 
\end{subfigure}
\vspace{-10pt}
\caption{Clean accuracy and robust accuracy of ResNet-18 (Solid Markers) and ResNet-18-SR (Hollow Markers) with different training strategies on CIFAR-10. The upper right corner means that the model has the best clean and robust accuracy.}
\vspace{-5pt}
\label{fig:cifar10-resnet} 
\end{figure*}

\begin{figure*}[ht]
\centering
\begin{subfigure}[b]{0.23\linewidth}
\centering
\includegraphics[width=\linewidth]{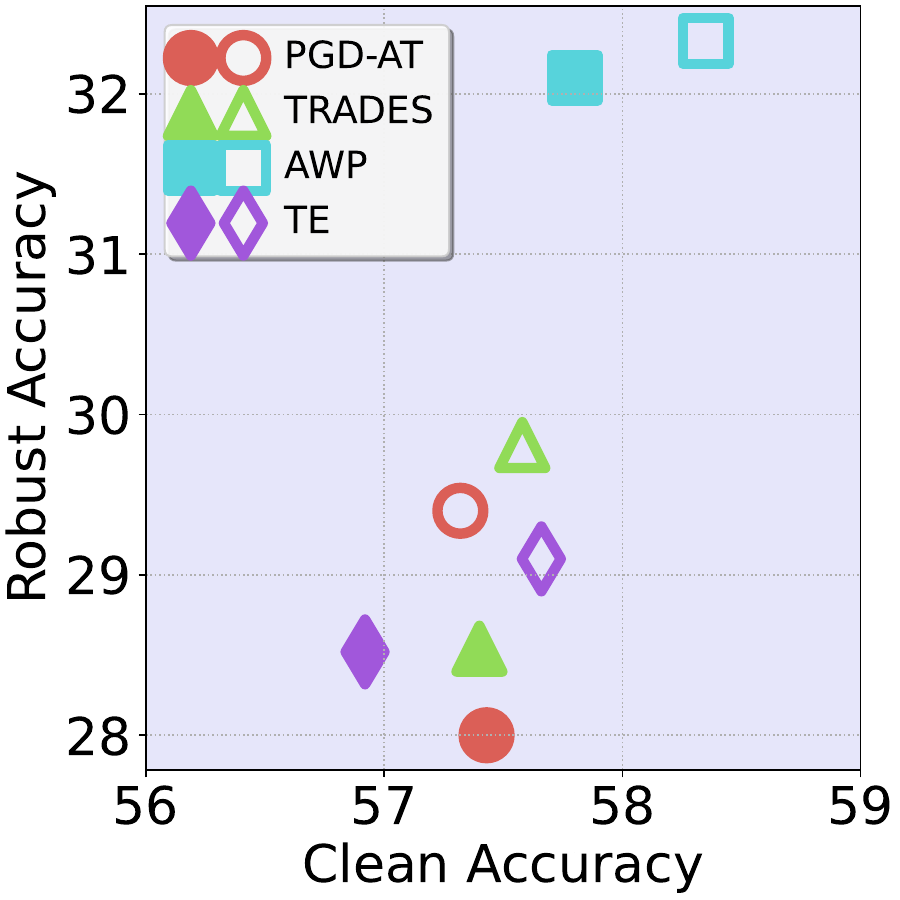}
\caption{Under PGD-20.}
\label{fig:cifar100-resnet-pgd20} 
\end{subfigure}
\begin{subfigure}[b]{0.23\linewidth}
\centering
\includegraphics[width=\linewidth]{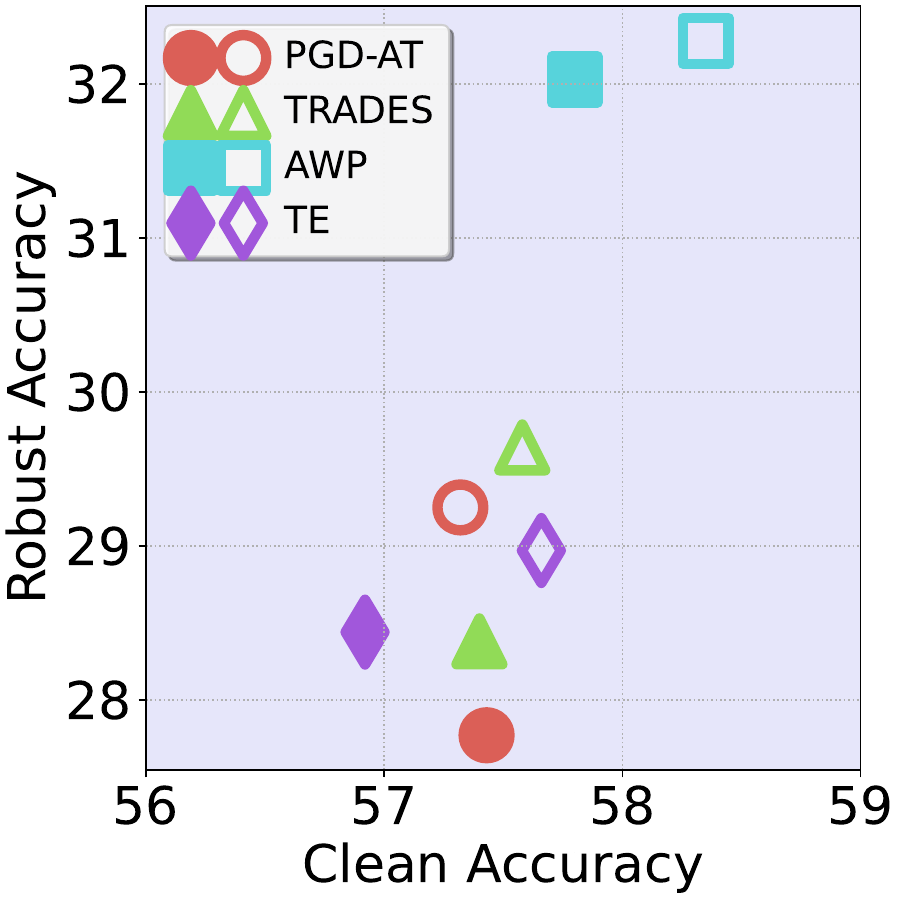}
\caption{Under PGD-100.}
\label{fig:cifar100-resnet-pgd100} 
\end{subfigure}
\begin{subfigure}[b]{0.23\linewidth}
\centering
\includegraphics[width=\linewidth]{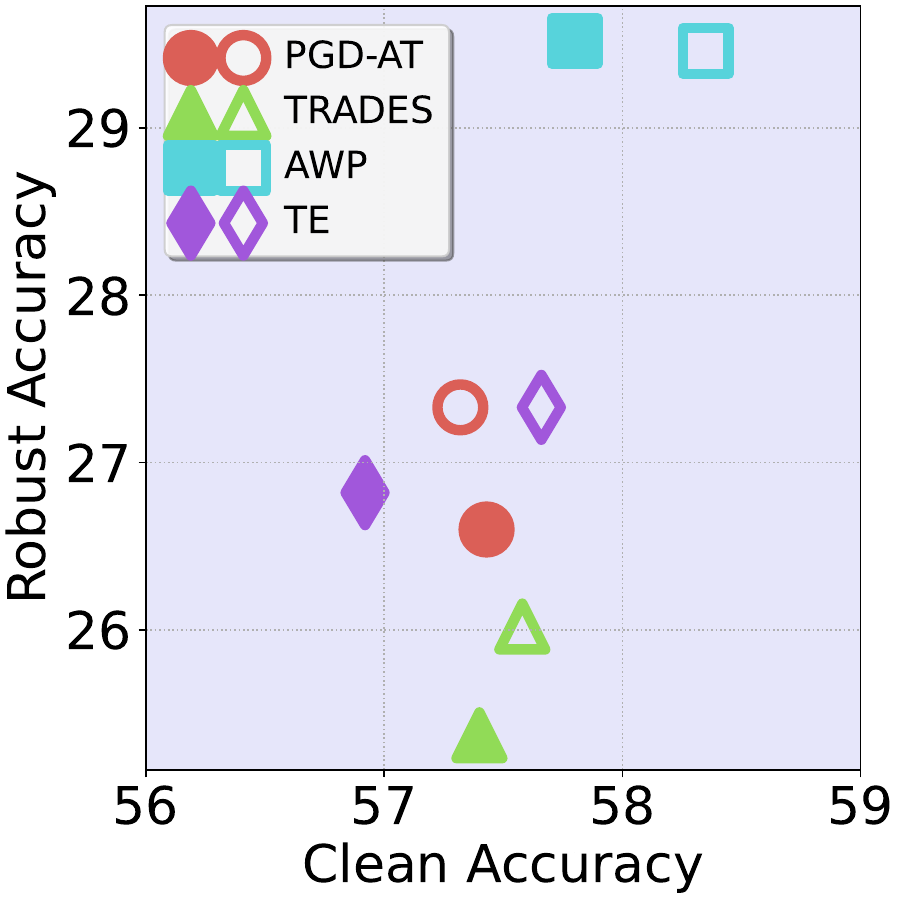}
\caption{Under C\&W-100.}
\label{fig:cifar100-resnet-cw} 
\end{subfigure}
\begin{subfigure}[b]{0.23\linewidth}
\centering
\includegraphics[width=\linewidth]{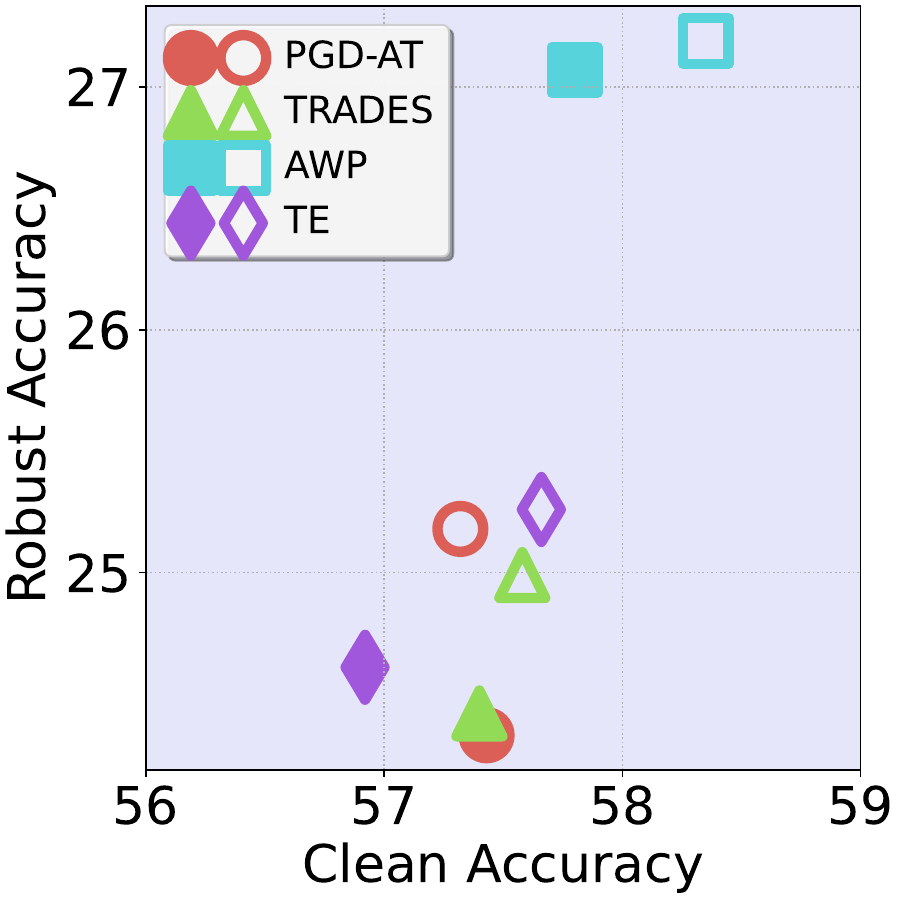}
\caption{Under AA.}
\label{fig:cifar100-resnet-aa} 
\end{subfigure}
\vspace{-10pt}
\caption{Clean accuracy and robust accuracy of ResNet-18 (Solid Markers) and ResNet-18-SR (Hollow Markers) with different training strategies on CIFAR-100. The upper right corner means that the model has the best clean and robust accuracy.}
\label{fig:cifar100-resnet} 
\vspace{-10pt}
\end{figure*}

\subsection{\SysName}

As we show that the IB skip connection can be built with several basic components, in this section we give a reliable example, combined with singular regularization, named \SysName. The overview of \SysName is shown in Figure~\ref{fig:overview}, which contains three main steps, following the design principle of Proposition~\ref{pro:sc}.

For an input image $x$, the first step in \SysName is to use the singular regularization (SR) modules to recalibrate the singular values and singular vectors. To better learn the general layout information from $x$ instead of the fine-grained features, we introduce downsampling operations. With downsampling operations, we can decouple detailed information in the images with the layout information and obtain multiscale inputs. We first downsample the $x$ into four different resolutions, i.e., $x$, $x^{\downarrow}$, $x^{\downarrow\downarrow}$, and $x^{\downarrow\downarrow\downarrow}$, where each $\downarrow$ means a downsample operation. We only use 4 scales, because we consider a trade-off between computational complexity and the effectiveness of decoupling. Specifically, the four different resolutions for the input are 32, 24, 16, and 8, where 32 is the original image resolution. For the input under each scale, we use an independent SR module to regularize its singular values and vectors. For example, we have
\begin{align*}
    x^{\downarrow}_{\mathrm{SR}} = \mathrm{SR}^{\downarrow}(x^\downarrow)
\end{align*}
for $x^\downarrow$.
In the second step, we use the Sigmoid function to compress the information from the SR modules and upsample the output to make them have the same resolution as $x$. After that, we average them to smooth the features of $x$ under different scales, i.e.,
\begin{align*}
    x_\mathrm{avg} = \frac{1}{4}(\varsigma(x_{\mathrm{SR}}) + \varsigma(x^{\downarrow}_{\mathrm{SR}})^{\uparrow} + \varsigma(x^{\downarrow\downarrow}_{\mathrm{SR}})^{\uparrow\uparrow} + \varsigma(x^{\downarrow\downarrow\downarrow}_{\mathrm{SR}})^{\uparrow\uparrow\uparrow}),
\end{align*}
where the $\uparrow$ is an upsampling operation, and $\varsigma$ represent the Sigmoid function. After the upsampling operations, all outputs have the same resolution. In the third step, we adopt convolutional layers $c(\cdot)$ to first extract deep features and then use different convolutional layers, e.g., $p_1(\cdot)$ and $p_2(\cdot)$, to modify the feature dimensions, matching the corresponding dimensions of the classification model's features, i.e., for the $i$-th feature,
\begin{align*}
    f_i = p_i (c(x_\mathrm{avg})).
\end{align*}
We simply add $f_i$ to the corresponding features in the classification model. Note that, in ResNet~\cite{he_deep_2016} or WideResNet~\cite{zagoruyko_wide_2016}, before the first residual block, there is a convolutional layer to convert the input $x$ to high-dimensional features, and we count the outputs of this convolutional layer as the first feature in the classification model. The detailed structures of \SysName can be found in the supplementary materials.

To train \SysName with the classification model, we need to consider constraining both $x_\mathrm{avg}$ and $f_i$ to meet the request that we want \SysName to provide a more compressed representation of $x$ and perform singular regularization simultaneously. More importantly, \SysName should calibrate the misclassification information in the classification model with correct representations. The representation compressing will be executed implicitly due to the IB skip connection. Therefore, we only introduce two loss terms to calibrate misclassification information, i.e., $L_\mathrm{svd}$ and $L_\mathrm{info}$. Specifically,
\begin{align*}
    &L_\mathrm{svd} = \Vert \Sigma_\mathrm{avg} - \Sigma \Vert_2 \\
    + \Vert U_\mathrm{avg} V'_\mathrm{avg} &- UV' \Vert_2 + \Vert x_\mathrm{avg} - x_\mathrm{clean} \Vert_2,
\end{align*}
where $x_\mathrm{avg} = U_\mathrm{avg}\Sigma_\mathrm{avg} V'_\mathrm{avg}$, which is computed from an adversarial example $x_\mathrm{adv}$, and $x_\mathrm{clean} = U\Sigma V'$, which is the corresponding clean image. And,
\begin{align*}
    L_\mathrm{info} = \sum_i \Vert f_i - f_i^\mathrm{clean} \Vert_2,
\end{align*}
where $f_i$ is computed from the adversarial example $x_\mathrm{adv}$, and $f_i^\mathrm{clean}$ is computed from the clean image $x_\mathrm{clean}$. Clearly, $L_\mathrm{svd}$ is proposed to calibrate adversarial information in the singular values and vectors. $L_\mathrm{info}$ is proposed to calibrate adversarial information in the extracted features. Without them, \SysName will not converge to stable conditions. To make the learnable weight matrix in SR orthogonal, we use the penalty term $R_\tau$ in~\cite{zhang_decomposition_2023}. Overall, the training loss can be written as
\begin{align*}
    L = L_\mathrm{ori} + \lambda_1 (L_\mathrm{svd} + L_\mathrm{info}) + \lambda_2 R_\tau,
\end{align*}
where $L_\mathrm{ori}$ is the original training loss, and $\lambda_1$ and $\lambda_2$ are two hyperparameters.

\begin{figure*}[ht]
\centering
\includegraphics[width=1.0\linewidth]{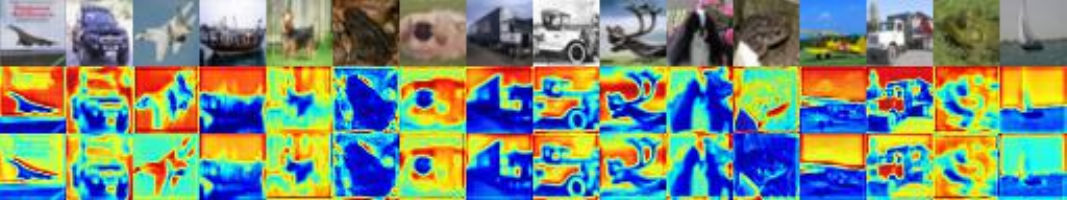}
\vspace{-15pt}
\caption{Heatmap generated by ScoreCam for ResNet-18 and ResNet-18-SR. The first row is clean images. The second row is heatmaps for clean images on ResNet-18. The third row is heatmaps for clean images on ResNet-18-SR.}
\label{fig:scorecam} 
\vspace{-5pt}
\end{figure*}

\begin{table}[ht]
\centering
\begin{adjustbox}{max width=1.0\linewidth}
\begin{tabular}{c|c|c}
 \Xhline{1.5pt}
\textbf{Model} & \textbf{\# of Parameters (M)} & \textbf{Computational Complexity (G)} \\ \hline
ResNet-18 & 11.17 & 0.56 \\ \hline
ResNet-18-SR & 11.35 & 0.73 \\ \hline
WRN-28-10 & 36.48 & 5.25 \\ \hline
WRN-28-10-SR & 36.79 & 5.57 \\
 \Xhline{1.5pt}
\end{tabular}
\end{adjustbox}
\caption{Comparisons of the number of parameters (M, million) and the amount multiply-add operations (G, giga).}
\vspace{-10pt}
\label{tab:cc}
\end{table}

\begin{table*}[ht]
\centering
\begin{adjustbox}{max width=1.0\linewidth}
\begin{tabular}{c|c|c|c|c|c|c}
 \Xhline{1pt}
\textbf{Loss} & $L_\mathrm{CE}$ & $L_\mathrm{svd}$ & $L_\mathrm{info}$ & $L_\mathrm{CE}$+$L_\mathrm{svd}$ & $L_\mathrm{CE}$+$L_\mathrm{info}$ & $L_\mathrm{CE}$+$L_\mathrm{svd}$+$L_\mathrm{info}$ \\ \hline
\textbf{Robust Accuracy} & 52.53 & 82.64 & 83.11 & 52.56 & 52.51 & 52.61 \\ 
 \Xhline{1pt}
\end{tabular}
\end{adjustbox}
\caption{Adaptive attacks for \SysName with different attack loss functions under PGD-20.}
\vspace{-10pt}
\label{tab:ada}
\end{table*}

\section{Experiments}
In this section, we will first introduce the experiment settings. Then, we provide detailed ablation studies to explore the number of $p_i(\cdot)$ and $\lambda_1$. In our main experiments, we compare the results of models trained with different strategies and datasets. We further provide analysis of how the IB skip connections decrease the sensitivity of local perturbation during the attacks. To compare the complexity of the model, we give details of the number of parameters and multiply-add operations in the model. As we introduce a new module, it is necessary to analyze the possible adaptive attacks. We show the results in Section~\ref{sec:aa}. Furthermore, we analyze the singular regularization module with its outputs and performance in the supplementary materials. Besides accuracy under adversarial attacks, we evaluate accuracy under common corruptions in the supplementary materials.

\subsection{Baselines}
We choose two toy image datasets, CIFAR-10~\cite{krizhevsky2009learning} and CIFAR-100~\cite{krizhevsky2009learning}, and one real-world dataset Tiny-Imagenet~\cite{le2015tiny}, and two popular classification models, ResNet-18~\cite{he_deep_2016} and WideResNet-28-10 (WRN-28-10)~\cite{zagoruyko_wide_2016}. For each model, we improve it with \SysName, dubbed ResNet-18-SR and WRN-28-10-SR, respectively. For adversarial training strategies, we consider four different methods, i.e., PGD-AT~\cite{madry_towards_2018}, TRADES~\cite{zhang_theoretically_2019}, AWP~\cite{wu_adversarial_2020}, and TE~\cite{dong_exploring_2022}. Specifically, the adversarial examples generated by each method are constrained under $L_\infty$-norm with settings $\epsilon=8/255$, $\alpha=2/255$, and 10 steps. To train models, we use SGD as an optimizer, with an initial learning rate of 0.1, momentum of 0.9, weight decay of $5e-4$, and batch size of 128. We decrease the learning rate at the 100-th and 150-th epoch by multiplying 0.1. For $\lambda_2$ used in \SysName, we directly use the value in~\cite{zhang_decomposition_2023}, i.e., $\lambda_2=1e-4$. When evaluating the robust models, we consider four 
$L_\infty$-normed attacks in our main paper, and other attacks can be found in supplementary materials. Four representative attacks are the PGD attack~\cite{madry_towards_2018} with cross-entropy loss under 20 and 100 steps (PGD-20 and PGD-100), PGD attack with the C\&W loss~\cite{carlini_towards_2017} under 100 steps (C\&W-100), and AutoAttack (AA)~\cite{croce_reliable_2020}. All these attacks are under $\epsilon=8/255$. Details of \SysName can be found in the supplementary materials.

\subsection{Ablation Study}

We study different configurations in \SysName, including the number of $p_i(\cdot)$, i.e., IB skip connections, and the weight $\lambda_1$ in training loss. The results are obtained on CIFAR-10 with ResNet-18-SR, which is trained under PGD-AT.

In Table~\ref{tab:layer}, we set $\lambda_1 =1.0$ and explore the effectiveness of different skip connections. The results indicate that when we add three $p_i(\cdot)$ to the original model, we can obtain the best trade-off between clean accuracy and robust accuracy. Furthermore, we find that when there are 4 or 5 $p_i(\cdot)$, the model will not converge. Therefore, we do not show them in the table. Too few skip connections or too many skip connections will decrease the performance to a different degree. We think the reasons could be that (1) too few skip connections will only bring information encoding regularization to the top layers, which makes it hard to be consistent in the deeper layers, (2) too many skip connections will disrupt the original intermediate information encoding in the deeper layers, which is hard to be extracted with a simple $p_i(\cdot)$. Although the second reason could be addressed by using a more complex $p_i(\cdot)$, it will increase the computational complexity. In Table~\ref{tab:lambda}, we use three skip connections in the original model and explore the effectiveness of different $\lambda_1$. With increasing $\lambda_1$, both clean accuracy and robust accuracy follow a similar tendency. Therefore, we find that when $\lambda_1 = 20.0$, we can obtain the best clean and robust accuracy simultaneously. In the following experiments, we use three $p_i(\cdot)$ and $\lambda_1=20.0$. 

Furthermore, we find that the information bottleneck will restrict some benign information, hurting clean accuracy, because there exists a trade-off between clean accuracy and robust accuracy under the information bottleneck theory~\cite{alemi_deep_2017}, which is similar to the original explicit optimization with the Lagrange multiplier. To better show such influence in the information bottleneck, we compare $x_\mathrm{avg}$ under different $\lambda_1$ in Figure~\ref{fig:lambda}. The results indicate that a smaller $\lambda_1$ will make the model learn a more compressed representation, containing very less information from the input. On the other hand, a larger $\lambda_1$ will make $x_\mathrm{avg}$ contain more information of the input. It is to say that we implicitly control the strength of the information bottleneck with $\lambda_1$.

\subsection{Main Results}

In this section, we mainly compare the performance of models equipped with \SysName and original models on different datasets. Furthermore, we explore the regional faithfulness of the classification models from the view of sensitivity to perturbation. Specifically, we verify the effectiveness of \SysName from the point of view of information bottleneck theory in the supplementary materials.

In Figure~\ref{fig:cifar10-resnet}, we compare the clean accuracy and robust accuracy under different attacks on CIFAR-10. Because the robust accuracy under different attacks is various, to achieve the best view, we use different y-axis intervals for different attacks. With \SysName, both the clean accuracy and the robust accuracy will increase in general. However, we find that if adversarial training methods adopt additional loss functions besides the cross-entropy loss, such as TE with an MSELoss and TRADES with a KLDiv loss during the training process, the improvement could be less significant than others. The reason we think is that the additional training loss will make the $\lambda_1$ in $L$ we find in Table~\ref{tab:lambda} not optimal. Therefore, for different training strategies, it is better to explore the optimal $\lambda_1$ to achieve the best results. But in our experiments, to keep consistent and universal, we use the same $\lambda_1$ for all training methods.

On the other hand, in Figure~\ref{fig:cifar100-resnet}, we compare clean accuracy and robust accuracy under different attacks on CIFAR-100. In most cases, our \SysName can provide significant improvement for robust accuracy. This is because CIFAR-100 contains more classes than CIFAR-10, resulting in most training strategies not utilizing the full potential of the classification model. However, powerful training strategies, like AWP, already achieve very high clean accuracy and robust accuracy, therefore, the improvement will be less than others. But, \SysName can still improve the clean accuracy. For other results on WRN-28-10 and more, we put them in the supplementary materials.

We notice that most of the previous works in studying adversarial robustness ignore the regional faithfulness of the classification results, i.e., which area of the given image is the more sensitive to noise and will influence the prediction. To analyze the sensitive area of the given images, we adopt ScoreCam~\cite{scorecam} to plot the heatmaps for ResNet-18 and ResNet-18-SR, respectively, which can be found in Figure~\ref{fig:scorecam}. In the heatmaps, the higher temperature means changing such an area will have a more significant influence on the final prediction. Comparing the heatmaps of ResNet-18 and ResNet-18-SR, we find that the original model is more sensitive to the background and \SysName can effectively address such a problem. We think it is because the adversarial perturbation can hide some information in the background area, making the model give incorrect predictions. On the other hand, our \SysName can provide regularization for both singular values and vectors, which is effective in removing background perturbation and decreases the mutual information between the learned presentations and the inputs. Therefore, our method is effective and interpretable.

\subsection{Model Complexity}

We compare the model complexity from two aspects, i.e., the number of model parameters and the amount of multiply-add operations. The results are obtained based on the tool, \textit{ptflops}~\cite{ptflops}. The results in Table~\ref{tab:cc} indicate that \SysName only introduces about 0.8-1.6\% additional parameters, and increases about 0.17-0.32 G additional multiply-add operations, which proves that \SysName is efficient. We notice that when injecting \SysName into WideResNet, it will cause a bigger increase in both the number of parameters and the amount of multiply-add operations. This is because the feature dimensions in WideResNet are higher. Therefore, we need to use more convolutional kernels to match the dimensions, which will introduce more parameters and multiply-add operations. In summary, \SysName is efficient. 

\subsection{Adaptive Attacks}
\label{sec:aa}

As we introduce a new module into the original model, it is important to learn the potential adaptive attacks. We consider building adaptive attacks based on the loss terms $L_\mathrm{svd}$ and $L_\mathrm{info}$, because they will influence the features extracted by \SysName. In Table~\ref{tab:ada}, we show the robust accuracy under PGD-20 attacks using different loss terms, where $L_\mathrm{CE}$ is the cross-entropy loss, i.e., the default loss in PGD. The results indicate that even under adaptive attacks, the model still obtains a high robust accuracy. It is because we only use \SysName to restrict intermediate features in the original models, instead of directly using the features from \SysName. The visualization results can be found in the supplementary materials.

\section{Conclusion}

In this paper, we study adversarial examples in the view of singular value decomposition. We find that the position of adversarial information in singular values and vectors is related to the attacks, instead of the training methods of the models. Based on this observation, we propose a new plug-in module, \SysName, for classification models. When designing \SysName, we innovatively introduce information bottleneck theory. By proving a general structure can fulfill the implicit information bottleneck regularization, we design \SysName in the same way. Through comprehensive experiments, we show the effectiveness of our method and prove its efficiency. We also give some interpretable evidence to verify the effectiveness of \SysName. We hope that our work can provide a new perspective on robust model design and inspire others. Moreover, addressing the trade-off between clean accuracy and robust accuracy is another future work.

{
    \small
    \bibliographystyle{ieeenat_fullname}
    \bibliography{bib}
}

\appendix

\begin{table*}[h]
\centering
\begin{adjustbox}{max width=1.0\linewidth}
\begin{tabular}{c|c|c|c|c|c|c|c}
 \Xhline{2pt}
\textbf{Model} & \textbf{\# of Parameters (M)} & \textbf{Computational Complexity (G)} & \textbf{Clean Accuracy} & \textbf{PGD-20} & \textbf{PGD-100} & \textbf{C\&W-100} & \textbf{AA} \\ \hline
\textbf{WRN-28-10} & 36.48 & 5.25 & 86.78 & 53.68 & 53.48 & 53.22 & 50.92 \\ \hline
\textbf{WRN-28-10-SR} & 36.79 & 5.57 & 86.74 & 54.30 & 53.90 & 52.99 & 50.94 \\ \hline
\textbf{WRN-28-10-SR-L} & 37.89 & 6.70 & 86.66 & 54.26 & 53.93 & 53.40 & 51.22 \\ \hline
\textbf{WRN-28-10-SR-XL} & 38.81 & 7.65 & \textbf{86.91} & \textbf{54.55} & \textbf{54.25} & \textbf{53.90} & \textbf{51.53} \\
 \Xhline{2pt}
\end{tabular}
\end{adjustbox}
\caption{Comparisons between different WRN-28-10 structures.}
\vspace{-10pt}
\label{tab:wrn}
\end{table*}

\begin{figure*}[ht]
\centering
\begin{subfigure}[b]{0.45\linewidth}
\centering
\includegraphics[width=\linewidth]{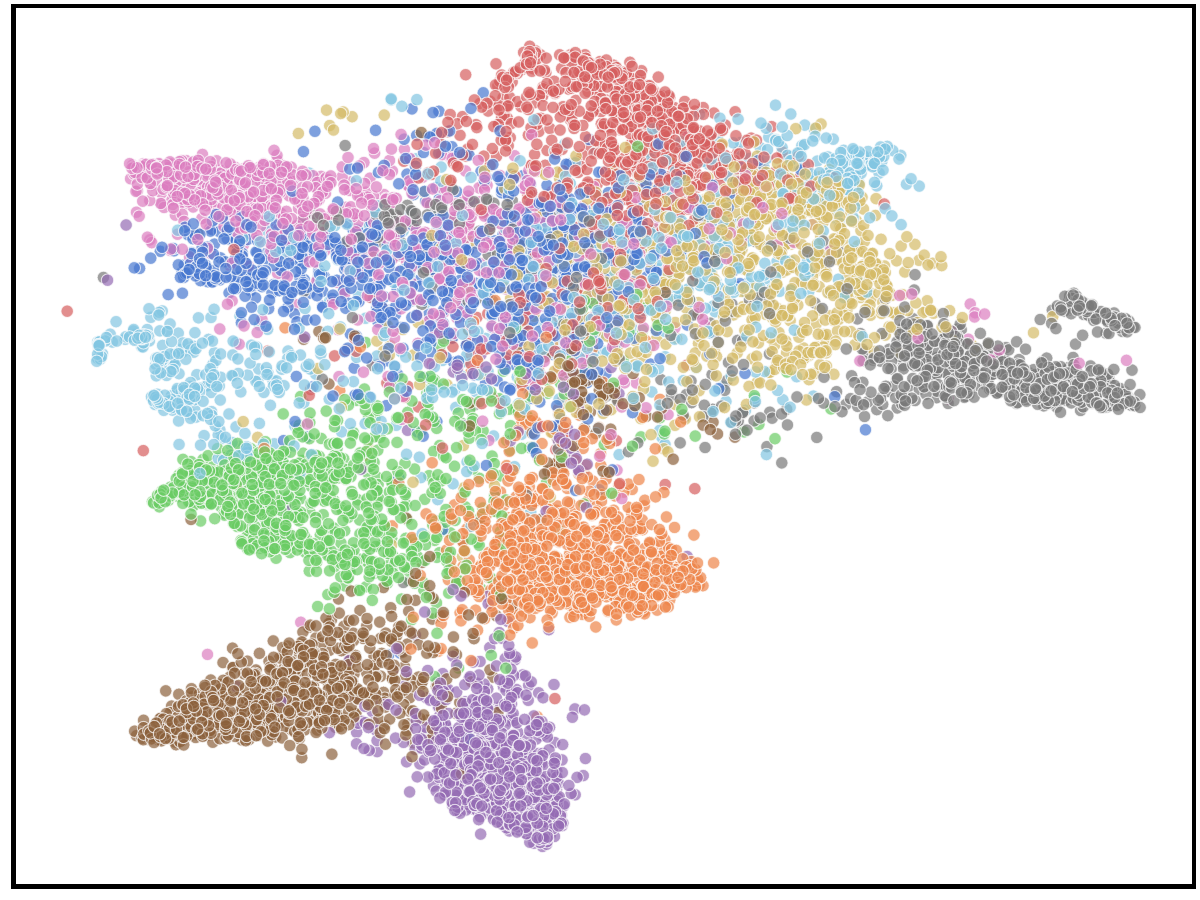}
\caption{ResNet-18}
\label{fig:tsne-ori} 
\end{subfigure}
\begin{subfigure}[b]{0.45\linewidth}
\centering
\includegraphics[width=\linewidth,height=168pt]{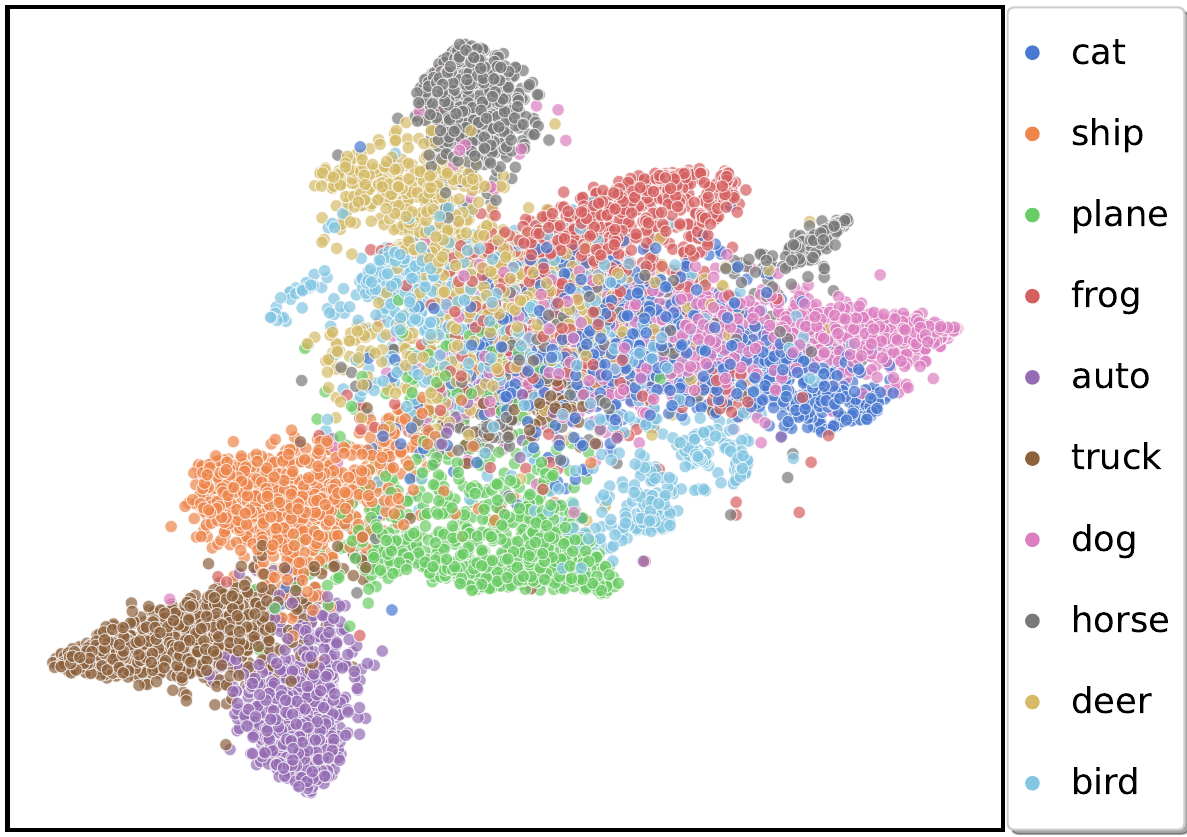}
\caption{ResNet-18-SR}
\label{fig:tsne-sr} 
\end{subfigure}
\caption{T-SNE results for ResNet-18 and ResNet-18-SR on CIFAR-10, respectively. Best viewed in color.}
\vspace{-10pt}
\label{fig:tsne} 
\end{figure*}

\section{Proof}

\begin{proposition}
If functional $\mathcal{G}(\cdot; \tau)$ is equipped with a double-sided saturating activation function, e.g., Sigmoid, there exists $g(\cdot;\tau)\in \mathcal{G}(\cdot; \tau)$, which can be used as an IB skip connection.
\end{proposition}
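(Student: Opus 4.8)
\textit{Proof proposal.} The plan is to prove the statement by explicit construction: I will exhibit a single $g(\cdot;\tau)\in\mathcal{G}(\cdot;\tau)$ and verify the defining inequality of Definition~\ref{def:ibsc}, namely $I(z,x;\theta,\tau)\le I(f(x;\theta),x;\theta)$ with $z=f(x;\theta)+g(x;\tau)$. First I would set $z_1=f(x;\theta)$, $z_2=g(x;\tau)$, and rewrite both mutual informations through entropies, so that $I(z,x)=H(z_1+z_2)-H(z_1+z_2\mid x)$ and $I(f(x;\theta),x)=H(z_1)-H(z_1\mid x)$; the claim then reduces to the single inequality $H(z_1+z_2)-H(z_1)\le H(z_1+z_2\mid x)-H(z_1\mid x)$, which splits naturally into an unconditional part and a conditional part.

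Second, I would take $g(x;\tau)=g_2\circ\varsigma\circ g_1(x)$, where $g_1,g_2$ are linear transformations and $\varsigma$ is the Sigmoid (the double-sided saturating activation assumed available in $\mathcal{G}$). For the conditional term I would argue $H(z_1+z_2\mid x)\ge H(z_1\mid x)$: both branches are computed from $x$, but since $g$ carries its own parameters $\tau$ (and, during training, the usual stochasticity), appending $z_2$ inside the conditioning cannot decrease the conditional entropy, so this direction holds in general. For the unconditional term I would invoke \textit{Intuition~1} together with the cited compression property of saturating activations~\cite{lee_reducing_2021}: since $\varsigma$ maps into the bounded set $(0,1)$, the branch output $z_2$ has bounded and thus controlled entropy, and because $z_1$ and $z_2$ are non-independent functions of the same input $x$ they share redundant information, which lets me bound $H(z_1+z_2)\le H(z_1)+\epsilon$ with $\epsilon$ made as small as desired by choosing $g_1,g_2$, and, under a proper supervision signal such as the cross-entropy loss, driving the residual contribution into the regime $H(z_1+z_2)\le H(z_1)$. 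Combining the two bounds gives the required inequality, so this particular $g$ is an IB skip connection.

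Finally I would stress that the proposition only asserts \emph{existence}, so a single working construction suffices, and that this construction is exactly the backbone of the \SysName module used later in the paper (linear maps around a Sigmoid, added to the main-branch features). The main obstacle I anticipate is the unconditional comparison $H(z_1+z_2)\le H(z_1)$: the entropy of a sum is not monotone in general, so the argument must rely genuinely on (i) the compactness of the Sigmoid range keeping $H(z_2)$ bounded, (ii) the strong statistical dependence between $z_1$ and $z_2$ forced by their common input $x$, and (iii) the shaping effect of the training objective; turning this from a heuristic into a quantitative bound is the delicate step, whereas the conditional-entropy inequality and the entropy decomposition are routine.
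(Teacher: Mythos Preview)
Your proposal is essentially identical to the paper's own proof: the same explicit construction $g=g_2\circ\varsigma\circ g_1$, the same entropy decomposition $I(z,x)=H(z_1+z_2)-H(z_1+z_2\mid x)$, the same split into the conditional inequality $H(z_1+z_2\mid x)\ge H(z_1\mid x)$ and the loose unconditional bound $H(z_1+z_2)\le H(z_1)+\epsilon$ argued via shared dependence on $x$, the Sigmoid's bounded range, and the effect of supervision. Your candid flagging of the unconditional step as heuristic is, if anything, more careful than the paper, which treats it at the same informal level.
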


\begin{proof}
To prove the existence of such a $g(\cdot;\tau)$ meeting the condition that $I(z, x; \theta, \tau) \le I(f(x;\theta), x; \theta)$, we only need to construct such a function.

Let $z_1 = f(x;\theta)$, $z_2 = g(x;\tau)$, and $z=z_1 + z_2$. We can rewrite $I(z, x; \theta, \tau)$ and $I(f(x;\theta), x; \theta)$ as:
\begin{align*}
    I(z, & x) = I(z_1 + z_2 , x) = H(z_1 + z_2) - H(z_1 + z_2 | x)\\
    & I(f(x;\theta), x) = I(z_1, x) = H(z_1) - H(z_1|x),
\end{align*}
where $H(\cdot)$ is the entropy. We aim to construct $g(x;\tau)$, fulfilling 
\begin{align*}
   H(z_1 + z_2) - H(z_1 + z_2 | x) \le H(z_1) - H(z_1|x).
\end{align*} 
Therefore, the construction is inspired from two aspects. First, $g(x;\tau)$ should decrease the uncertainty of $f(x;\theta)$, which loosely makes $H(z_1 + z_2) \le H(z_1) + \epsilon$. Second, $g(x;\tau)$ should increase the information of $f(x;\theta)$ extracted from $x$, which makes $H(z_1 + z_2 | x) \ge H(z_1|x)$. Clearly, for the second aspect, adding a new random variable to the conditional entropy will increase the value, i.e., $H(z_1 + z_2 | x) \ge H(z_1|x)$ holds in general.

To decrease the uncertainty of $f(x;\theta)$ by adding $g(x;\tau)$ to it, the construction is designed as follows:
\begin{align*}
    g(x;\tau) &= g_2 \circ \varsigma \circ g_1 (x),\\
    \varsigma (x) &= \frac{1}{1+e^{-x}},
\end{align*}
where $g_1$ and $g_2$ are two linear transformations. Then, we describe the reason why this structure can make $H(z_1 + z_2) \le H(z_1)$. 

Note that both $f(x;\theta)$ and $g(x;\tau)$ are calculated on the basis of $x$, which means that they are not independent. So, $g_1$ and $g_2$ will keep information of $x$ with the linear transformations, making $z_1$ and $z_2$ share redundant information of $x$. On the other hand, $\varsigma$ maps the values to $(0, 1)$, which reduces $H(\varsigma \circ g_1(x))$ and let $g(x;\tau)$ learn a compressed representation. With a proper supervision signal, for example, cross-entropy loss between predictions and ground-truth labels, $g(x;\tau)$ will be an IB skip connection.

Therefore, we find a possible case of $g$, which can be trained as an IB skip connection.
\end{proof}

\section{Details of \SysName}
\label{sec:detail}

We will introduce the details of the \SysName in three parts. First, we introduce the details of SR~\cite{zhang_decomposition_2023}. To modulate singular vectors, we adopt one convolutional layer, whose weight is an orthogonal matrix. After modulating singular vectors, we adopt fast Fourier transforms to transfer the input into the Fourier domain. Then, we use a convolutional layer to modulate the signal and transfer it from the Fourier domain to the original domain. Finally, we fuse the outputs with one convolutional layer and use a batch normalization layer to stabilize the output. Specifically, the kernel size and stride are 1 for all convolutional layers in SR. The input channels and output channels of the convolutional layers are 3 for RGB images. The number of channels of the orthogonal kernel is 12.

Second, we introduce the details of $c(\cdot)$. The design of $c(\cdot)$ is straightforward, in which we want to extract deep features with several convolutional layers. Considering the computational complexity, we adopt three convolutional layers with increasing channels, and the batch normalization layer and ReLU activation layer are added after each convolutional layer. For all convolutional layers, the kernel size is 3 and the stride is 1. The numbers of output channels are 16, 32, and 64. 

Finally, we introduce the details of $p_i(\cdot)$. Considering $p_i(\cdot)$ is to modify the dimension of the extracted features of $c(\cdot)$, we use one simple convolutional layer, whose kernel size is 3 and stride is 1, to increase or decrease channels to match the target channel amount. If needed, we add a downsampling operation to match the width and height dimensions of the feature.

\section{Experiment Results}

In this section, we present other results from our experiments to better demonstrate the superiority of our method.

\subsection{\SysName on Real-World Dataset}

In our main paper, we only show the results on two toy datasets. In this section, we will compare the results on a real-world dataset, Tiny-Imagenet. We train the models with FGSM-AT, and evaluate models under various $l_\infty$-norm attacks. Especially, because the image resolution is 64, we increase the downsample resolution to 48, 24, and 16.

\begin{table}[h]
\centering
\begin{adjustbox}{max width=1.0\linewidth}
\begin{tabular}{c|c|c|c|c|c}
 \Xhline{2pt}
\textbf{Model} & \textbf{Clean Accuracy} & \textbf{PGD-20} & \textbf{PGD-100} & \textbf{C\&W-100} & \textbf{AA} \\ \hline
\textbf{ResNet-18} & 42.47 & 33.92 & 33.91 & 32.14 & 31.85 \\ \hline
\textbf{ResNet-18-SR} & \textbf{43.52} & \textbf{34.96} & \textbf{34.97} & \textbf{33.25} & \textbf{32.85} \\
 \Xhline{2pt}
\end{tabular}
\end{adjustbox}
\caption{Evaluation on a real-world dataset, Tiny-Imagenet. The attack budget $\epsilon=2/255$.}
\vspace{-10pt}
\label{tab:tiny}
\end{table}

In Table~\ref{tab:tiny}, we compare the results of ResNet-18 and ResNet-18-SR. Clearly, with \SysName, the model will obtain both higher clean accuracy and robust accuracy. The improvement is about 1\%. The results indicate that our method works for real-world datasets.

\subsection{\SysName on Different Model Structure}

In this section, we compare \SysName on different model structures and evaluate the different designs of $c(\cdot)$ in \SysName. In Table~\ref{tab:wrn}, we first compare the results of WRN-28-10 and WRN-28-10-SR on CIFAR-10, trained with PGD-AT. The results indicate that when equipped with \SysName, the model will obtain higher robust accuracy under different attacks. It is to say that \SysName is general for different model architectures. On the other hand, because WRN-28-10 extracts deeper features than ResNet-18, we consider different settings of $c(\cdot)$ to study the influence of the extracted features in \SysName. Specifically, $c(\cdot)$ in WRN-28-10-SR is described in Section~\ref{sec:detail}. For WRN-28-10-SR-L and WRN-28-10-SR-XL, we consider using more output channels. In detail, we use two cascaded basic blocks as $c(\cdot)$ for WRN-28-10-SR-L and four cascaded basic blocks as $c(\cdot)$ for WRN-28-10-SR-XL. The total number of parameters and computational complexity are shown in Table~\ref{tab:wrn}. From the results, we can find that when we use a more complex $c(\cdot)$, robust accuracy will significantly increase, especially for C\&W-100 and AA. This is because a more complex $c(\cdot)$ will extract features to better match the intermediate representations of the classification model, which will help the classification model better converge. Therefore, for a larger classification model, the better choice of $c(\cdot)$ is to use a small deep convolutional model.

\subsection{Visualization of Feature Representation}

To prove the effectiveness of \SysName in feature learning, we show the T-SNE results to study both representation distributions and compressions, just like the previous work~\cite{mvib}. The visualization results in Figure~\ref{fig:tsne} indicate that with \SysName, the model will learn more compressed representations (axes of both plots are zoomed to the same space). Furthermore, the features are better separated under the T-SNE views. Therefore, \SysName works like implicitly adding an information bottleneck to the training process.

\subsection{Robust Accuracy under $l_2$ Attacks}

We show robust accuracy under various $l_2$-normed attacks to prove that our method is general for adversarial attacks under different norms. We consider PGD attacks and C\&W attacks. For both, we set $\epsilon=0.5$ and $\alpha=0.1$. In Table~\ref{tab:l2}, the results indicate that \SysName can improve the robustness under $l_2$-normed attacks, which means that our proposed method is general.

\begin{table}[h]
\centering
\begin{adjustbox}{max width=1.0\linewidth}
\begin{tabular}{c|c|c|c|c}
 \Xhline{2pt}
\textbf{Model} & \textbf{Clean Accuracy} & \textbf{PGD-20} & \textbf{PGD-100} & \textbf{C\&W-100} \\ \hline
\textbf{ResNet-18} & \textbf{83.28} & 62.38 & 61.97 & 60.08 \\ \hline
\textbf{ResNet-18-SR} & 83.27 & \textbf{62.88} & \textbf{62.44} & \textbf{60.65} \\
 \Xhline{2pt}
\end{tabular}
\end{adjustbox}
\caption{Robust accuracy under $l_2$-normed attacks.}
\vspace{-10pt}
\label{tab:l2}
\end{table}

\begin{figure*}[ht]
\centering
\includegraphics[width=1.0\linewidth]{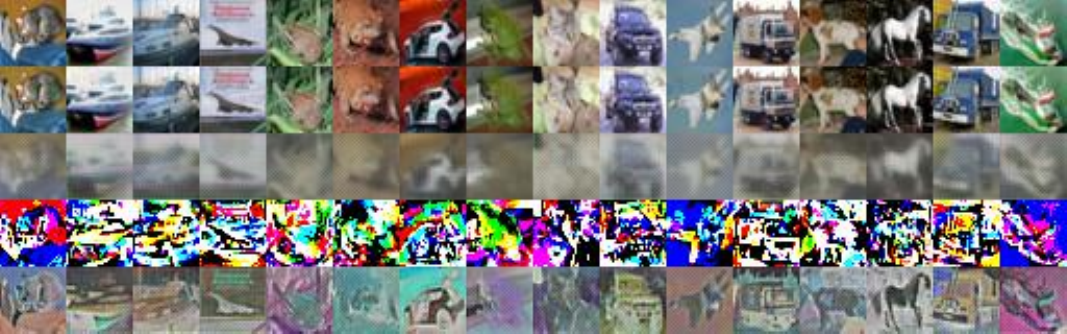}
\vspace{-15pt}
\caption{Visualization results of $x$, $x_\mathrm{adv}$, and $x_\mathrm{avg}$. The first row is clean images $x$. The second row shows the adversarial examples $x_\mathrm{adv}$. The third row represents $x_\mathrm{avg}$, whose input is $x_\mathrm{adv}$. The fourth row is $x - x_\mathrm{adv}$. The fifth row is $x_\mathrm{avg} - x_\mathrm{adv}$.}
\label{fig:img} 
\vspace{-10pt}
\end{figure*}

\begin{table}[ht]
\centering
\begin{adjustbox}{max width=1.0\linewidth}
\begin{tabular}{c|c|c|c|c|c}
 \Xhline{1.5pt}
\textbf{Input} & \textbf{Clean Accuracy} & \textbf{PGD-20} & \textbf{PGD-100} & \textbf{C\&W-100} & \textbf{AA} \\ \hline
$x$ or $x_\mathrm{adv}$ & \textbf{83.28} & 51.28 & 50.97 & 49.53 & 47.43 \\ \hline
$x_\mathrm{avg}$ & 80.95 & \textbf{52.80} & \textbf{52.76} & \textbf{61.44} & \textbf{54.65} \\ 
 \Xhline{1.5pt}
\end{tabular}
\end{adjustbox}
\caption{The clean accuracy and robust accuracy under the input of $x$, $x_\mathrm{adv}$, and $x_\mathrm{avg}$, respectively.}
\vspace{-15pt}
\label{tab:is}
\end{table}

\subsection{Are Singular Values and Vectors Corrected?}
\label{sec:sr}

To better study how the singular regularization works in the model inference phase, we show the visualization results in Figure~\ref{fig:img}. Additionally, we normalize both $x - x_\mathrm{adv}$ and $x_\mathrm{avg} - x_\mathrm{adv}$ to obtain the best views. From the plots, we can find that the singular regularization will remove some details from the input images, which correspond to the small singular values in the SVD. In other words, singular regularization keeps the main layout information and removes the additional unimportant details, which achieves information compression. This observation is aligned with our design from the perspective of information bottleneck theory.

On the other hand, we explore how much adversarial information has been removed in $x_\mathrm{avg}$. To obtain the results, we first isolate $x_\mathrm{avg}$ from the model training process. Then, we only train the SR module on the adversarial examples generated by a ResNet model during its adversarial training process. Therefore, $x_\mathrm{avg}$ will be a regularized input. When evaluating the removed adversarial information, we first generate adversarial examples $x_\mathrm{adv}$ with a ResNet model. Then we calculate $x_\mathrm{avg}$ based on $x_\mathrm{adv}$ and input $x_\mathrm{avg}$ to the ResNet model. That is to say, we consider a grey-box attack, where the adversary has no information about the SR module. As we only care about the adversarial information removed by SR module, this grey-box attack is reasonable. In Table~\ref{tab:is}, we compare clean accuracy when the input is $x$ and $x_\mathrm{avg}$, which is calculated on $x$. Then, we compare robust accuracy when the input is $x_\mathrm{adv}$ and $x_\mathrm{avg}$, which is calculated on $x_\mathrm{adv}$. For clean accuracy, we find the removed details will slightly decrease it. It is because when predicting the image's class, the model will use all information in the image. Removing benign information will have a negative impact on the model. For PGD-20 and PGD-100, we find robust accuracy increases by about 2\%. For C\&W-100 and AA, the improvement is more significant. These results are aligned with the observation in Figure 1 in the main paper, where we find that adversarial examples generated by C\&W-100 and AA are much easier to be purified by the singular regularization. Therefore, our analysis proves that the singular regularization will modify the singular values and vectors and remove the adversarial attack information.

\subsection{Accuracy under Common Corruptions}

In addition to adversarial perturbation, we consider the influence of common corruptions~\cite{hendrycks2019robustness}. Similarly, we isolate the SR modules during adversarial training and use them again in the inference phase to avoid the trade-off between information compression and representation expression. In Table~\ref{tab:cn}, we compare the accuracy of ResNet-18 and ResNet-18-SR under different common corruptions. The results indicate that with singular regularization, the proposed method can further defend against common corruptions, even the model is trained on adversarial examples.

\begin{table*}[h]
\centering
\begin{adjustbox}{max width=1.0\linewidth}
\begin{tabular}{c|c|c|c|c|c|c|c|c|c|c|c|c}
 \Xhline{2pt}
\textbf{Type} & \textbf{ShotNoise} & \textbf{GaussianNoise} & \textbf{ImpulseNoise} & \textbf{MotionBlur} & \textbf{GlassBlur} & \textbf{DefocusBlur} & \textbf{ZoomBlur} & \textbf{JPEGCompression} & \textbf{ElasticTransform} & \textbf{Pixelate} & \textbf{Fog} & \textbf{Frost} \\ \hline
\textbf{ResNet-18} & 75.59 & 75.17 & 66.76 & 67.97 & 73.31 & 71.22 & 73.94 & 80.28 & 75.38 & 78.33 & 28.03 & 68.20 \\ \hline
\textbf{ResNet-18-SR} & \textbf{77.60} & \textbf{77.11} & \textbf{68.67} & \textbf{70.15} & \textbf{74.25} & \textbf{72.55} & \textbf{75.35} & \textbf{80.91} & \textbf{75.97} & \textbf{78.93} & \textbf{30.33} & \textbf{69.36} \\
 \Xhline{2pt}
\end{tabular}
\end{adjustbox}
\caption{Accuracy under various common corruptions.}
\vspace{-10pt}
\label{tab:cn}
\end{table*}

\subsection{Additional Visualization Results}

We provide more visualization results in Figure~\ref{fig:img_supp}. To better show the singular regularization effects, we compare the $x_\mathrm{avg}$ under adversarial examples generated by PGD-20 and C\&W-100, respectively. From the results, we can find that the SR module is stable under different attacks. Furthermore, we explore whether adaptive attacks will influence the performance of our SR module. In Figure~\ref{fig:img_supp_ada}, we compare the $x_\mathrm{avg}$ under different adaptive attacks. The results indicate that the SR module can maintain the calibration function even under our designed adaptive attacks. 

\begin{figure*}[h]
\centering
\includegraphics[width=1.0\linewidth]{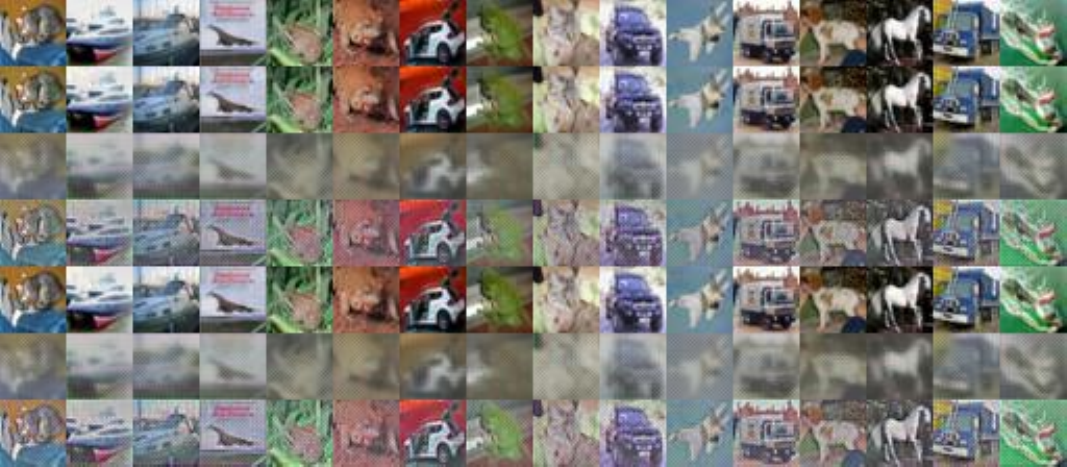}
\vspace{-5pt}
\caption{Visualization results of $x$, $x_\mathrm{adv}$, and $x_\mathrm{avg}$. The first row is clean images $x$. The second row shows the adversarial examples $x_\mathrm{adv}$ generated by PGD-20. The third row represents $x_\mathrm{avg}$, whose input is the second row. The fourth row is $x_\mathrm{adv} - x_\mathrm{avg}$ of the two above rows. The fifth row is the adversarial examples $x_\mathrm{adv}$ generated by C\&W-100. The sixth row represents $x_\mathrm{avg}$, whose input is the fifth row. The seventh row is $x_\mathrm{adv} - x_\mathrm{avg}$ of the above two rows.}
\label{fig:img_supp} 
\vspace{-10pt}
\end{figure*}

\begin{figure*}[h]
\centering
\includegraphics[width=1.0\linewidth]{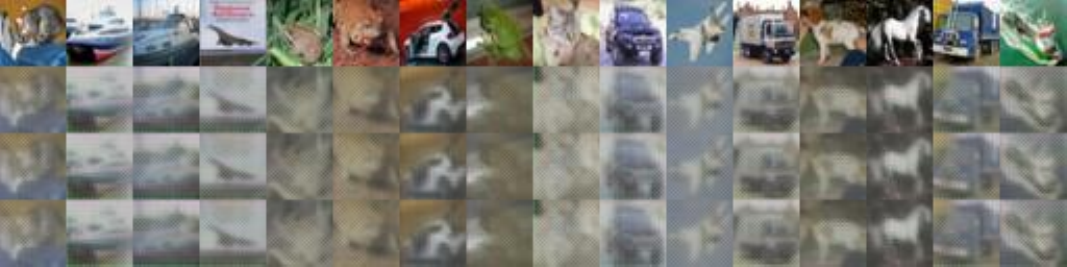}
\vspace{-5pt}
\caption{Visualization results of $x$, $x_\mathrm{adv}$, and $x_\mathrm{avg}$. The first row is clean images $x$. The second row shows $x_\mathrm{avg}$, whose input is adversarial examples generated by PGD-20 with $L_\mathrm{CE}$. The third row is $x_\mathrm{avg}$, whose input is adversarial examples generated by PGD-20 with $L_\mathrm{info}$. The second row represents $x_\mathrm{avg}$, whose input is adversarial examples generated by PGD-20 with $L_\mathrm{svd}$.}
\label{fig:img_supp_ada} 
\vspace{-10pt}
\end{figure*}

\end{document}